\documentclass{article}

\usepackage{microtype}
\usepackage{graphicx}
\usepackage{subcaption}
\usepackage{booktabs} 
\usepackage[textsize=tiny]{todonotes}
\usepackage{enumitem}
\usepackage{hyperref}

\usepackage[accepted]{icml2026}

\usepackage{amsmath}
\usepackage{amssymb}
\usepackage{mathtools}
\usepackage{amsthm}

\usepackage[capitalize,noabbrev]{cleveref}

\usepackage{xcolor}
\theoremstyle{plain}
\newtheorem{theorem}{Theorem}[section]

\newtheorem{lemma}[theorem]{Lemma}

\theoremstyle{definition}
\newtheorem{definition}[theorem]{Definition}
\newtheorem{assumption}[theorem]{Assumption}
\theoremstyle{remark}
\newtheorem{remark}[theorem]{Remark}

\usepackage[textsize=tiny]{todonotes}

\icmltitlerunning{The Stability of Singular Distribution}

\begin{document}

\twocolumn[

  \icmltitle{The Stability of Singular Distribution: A Spectral Perspective \\ on the Two-Phase Dynamics of Language Model Pre-training}

  \icmlsetsymbol{equal}{*}

  \begin{icmlauthorlist}
    \icmlauthor{Hongtao Zhang}{ict,sais,equal}
    \icmlauthor{Wenjie Zhou}{ict,ucas,equal}
    \icmlauthor{Chenxi Jia}{ict,seu}
    \icmlauthor{Wei Chen}{ict,ucas}
    \icmlauthor{Xueqi Cheng}{ict,ucas}
  \end{icmlauthorlist}
  
  \icmlaffiliation{sais}{School of Advanced Interdisciplinary Sciences, University of Chinese Academy of Sciences, Beijing, China}
  \icmlaffiliation{ict}{State Key Laboratory of AI Safety, Institute of Computing Technology, Chinese Academy of Sciences, Beijing, China}
  \icmlaffiliation{ucas}{University of Chinese Academy of Sciences, Beijing, China}
  \icmlaffiliation{seu}{School of Mathematics, Southeast University, Nanjing, China}

  \icmlcorrespondingauthor{Wei Chen}{chenwei2022@ict.ac.cn}

  \icmlkeywords{Machine Learning, ICML}

  \vskip 0.3in
]



\printAffiliationsAndNotice{\icmlEqualContribution}  

\begin{abstract}

Large language model pre-training typically exhibits a two-phase trajectory: a fast initial loss drop followed by a prolonged slow improvement. We identify an underlying spectral phenomenon, Stability of Singular Distribution (SoSD), where the trace-normalized singular value spectrum stabilizes early, even as parameter matrices continue to evolve. We demonstrate that synchronization between SoSD and the slow-descent regime is widely observed across diverse architectures (GPT-2, LLaMA) and settings, including various schedules (Step-wise, WSD, Cosine Decay), weight decays, and optimizers (AdamW, Muon). By analyzing a simplified Transformer, we prove that growing weight norms inevitably precipitate an early SoSD threshold, after which the rate of loss decrease becomes theoretically bounded by the variation in the singular distribution. We further interpret strategies like WSD and Muon through their ability to modulate the SoSD scale, offering a spectral lens for understanding efficient pre-training dynamics.

\end{abstract}

\section{Introduction}
Large Language Models (LLMs) have established themselves as the cornerstone of modern artificial intelligence \citep{brown2020language, achiam2023gpt}, achieving unprecedented scalability and generalization by leveraging the Transformer architecture \citep{vaswani2017attention} as their foundational backbone. However, the optimization dynamics governing their pre-training remain enigmatic, particularly regarding the temporal evolution of the training process.

A ubiquitous observation recorded across major technical reports \citep{touvron2023llama, chowdhery2023palm, zhang2022opt} is the characteristic two-phase trajectory of the training loss: an initial regime of precipitous decay followed by a prolonged period of asymptotic, heavy-tailed improvement. 
While this ``fast-then-slow'' two-phase behavior is empirically taken for granted as the standard convergence pattern of Large Language Models (LLMs), the underlying theoretical mechanisms driving this transition and specifically, what mechanistic factors dictate the onset of the slow-descent phase remain fundamentally under-explored.
\begin{figure}[t]
  \vskip 0.2in
  \begin{center}
    \centerline{\includegraphics[width=1.05\linewidth]{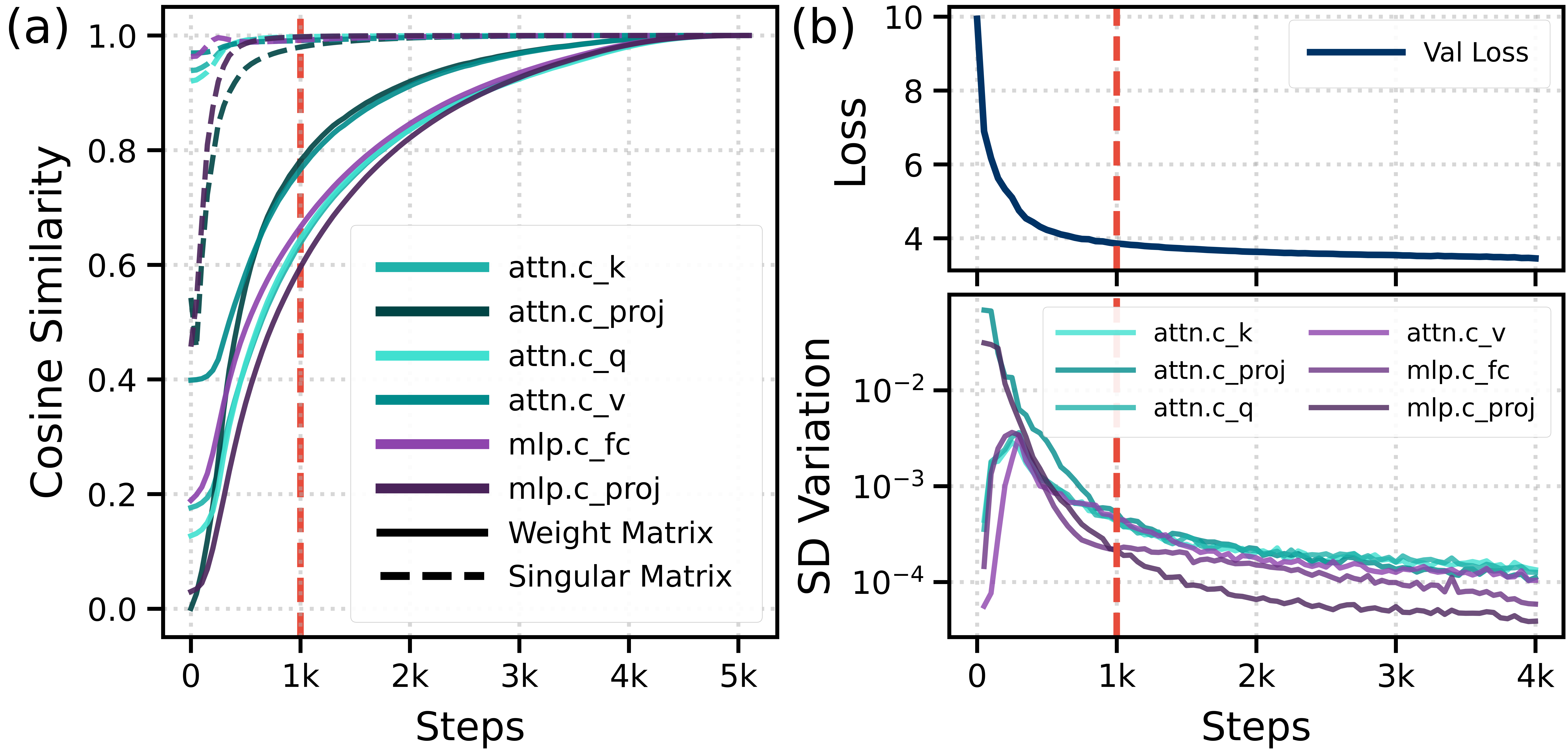}}
    \caption{
        \textbf{Identification of the Stability of Singular Distribution (SoSD) phenomenon on GPT-2 Small.}
        \textbf{(a)} Evolution of cosine similarity between current states and final states. The singular distributions (dashed lines) stabilize significantly earlier than the parameter matrices themselves (solid lines).
        \textbf{(b)} Synchronization between Validation Loss (top) and Singular Distribution Variation (SD Variation, bottom). Vertical red dashed lines mark the approximate steps where singular value matrices stabilize, coinciding with the transition into the slow descent regime.
    }
    \label{fig_Intro}
  \end{center}
\vskip -0.4in
\end{figure}

Existing analyses often rely on restricted task formulations, such as in-context learning \citep{olsson2022context,bietti2023birth,zhang2025training} or linear regression \citep{zhang2024trained}, to derive tractable convergence bounds. 
Recently, the focus has shifted towards more granular, multi-stage interpretations of Transformer optimization \citep{zhou2022towards,yao2025analysis,zhang2025complexity}. 
Notably, recent work on Condensation to Rank Collapse \citep{chen2025condensation} employs a gradient flow framework on linearized attention, identifying a two-stage transition from parameter condensation to asymptotic rank collapse under small initialization. 
Motivated by these insights, we propose to shift the lens towards the \textit{temporal dynamics} of the spectral evolution. Crucially, this perspective allows us to investigate the mechanistic synchronization between stability of singular distribution and the macroscopic saturation of the loss function, addressing a fundamental question: 

\begin{center}
\textit{What intrinsic mechanism governs the transition from fast learning to slow saturation, and how does the spectral evolution of the parameters dictate this regime shift?}
\end{center}

To investigate the mechanism underlying this transition, we analyze the spectral dynamics of the parameter matrices throughout the pre-training process. 
We identify a phenomenon termed the \textbf{Stability of Singular Distribution (SoSD)}, where the normalized singular value spectrum stabilizes significantly earlier than the parameter matrices themselves
(Figure \ref{fig_Intro}(a)), and the onset of this stability aligns with the validation loss entering a plateau, exhibiting a tight synchronization with optimization saturation (Figure \ref{fig_Intro}(b)). 
Our analysis reveals that this spectral stabilization closely characterizes the shift from the fast to the slow descent phase. 
Our specific contributions are as follows:

\begin{enumerate}
    \item \textbf{Identification of SoSD}: 
    We identify the Stability of Singular Distribution (SoSD) phenomenon, observing that the singular value distribution enters a stable state significantly prior to the stability of parameter matrices(Figure \ref{fig_Intro}(a)). Across GPT-2 and LLaMA families, we report a synchronization that the onset of SoSD aligns with the loss function's transition to the slow descent regime(Figure \ref{fig_Intro}(b)).
    
    \item \textbf{Theoretical Analysis of SoSD}: We establish a theoretical framework to clarify the mechanism of SoSD. We first establish the emergence of SoSD (Theorem~\ref{thm:ssd_stability}) contingent upon the Non-Degeneracy of parameters and Gradient Boundedness (Assumption~\ref{ass:regularity}). Building on this, we incorporate Smoothness and Margin Conditions (Assumption~\ref{ass:smoothness_margin}) to demonstrate that loss descent is dynamically coupled with singular distribution variation: while large variation is associated with rapid loss decay, the onset of SoSD strictly bounds the subsequent loss reduction (Theorem~\ref{thm:two_phase}).

    \item \textbf{Interpreting Pre-training Strategies via SoSD}:
    We interpret pre-training strategies through the lens of SoSD dynamics and the derived stability bound $\varepsilon \propto \eta/\|W\|$, where $\eta$ and $\|W\|$ denote the learning rate and the norm of parameters, respectively. We demonstrate that learning rate schedules (step-wise decay and continuous annealing) facilitate optimization by tightening this bound, thereby mitigating the SoSD-related constraint to enable further loss minimization. 
    Conversely, we find that Weight Decay facilitates loss descent by suppressing the growth of the weight norm; this mechanism relaxes the stability constraint, thereby permitting larger updates to the singular distribution. Finally, we validate SoSD with the Muon optimizer, observing that the SoSD phenomenon persists alongside its superior training efficiency.
\end{enumerate}

\paragraph{Conflict of Interest Disclosure.}
The authors declare no financial conflicts of interest related to this work.

\begin{figure*}[t!]
  \vskip 0.2in
  \begin{center}
    \centerline{\includegraphics[width=2\columnwidth]{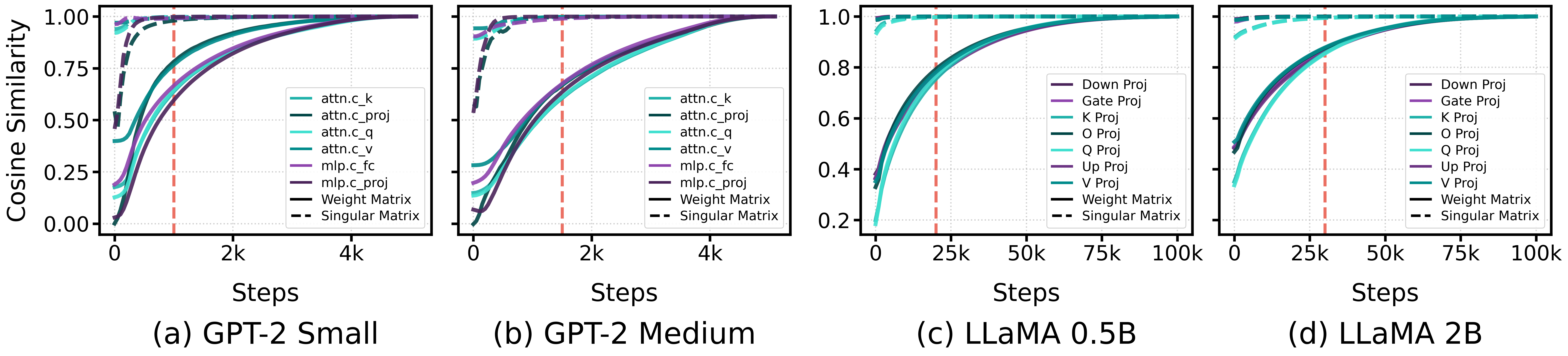}}
    \caption{
        \textbf{Evolution of cosine similarity between current parameters($t$) and their final states($T$) during pre-training.} The plots analyze cosine similarity $\mathrm{cos}\langle W_t, W_T \rangle$ and $\mathrm{cos}\langle \Sigma_t, \Sigma_T \rangle$ across four models: (a) GPT-2 Small, (b) GPT-2 Medium, (c) LLaMA 0.5B, and (d) LLaMA 2B. Solid lines represent the weight matrices ($W$), while dashed lines represent the singular value matrices ($\Sigma$). Colors denote different projection layers. The vertical red dashed lines mark the approximate steps where the singular value matrices stabilize.
    }
    \label{fig_cossim}
  \end{center}
\end{figure*}

\vspace{-0.4\baselineskip}
\section{Related Work}

\paragraph{Training Dynamics of Transformers}
Understanding the optimization trajectory of Transformers remains a formidable challenge given the interplay between non-convex objectives and massive parameter scales. Early theoretical efforts primarily dissected the dynamics of simplified, single-layer attention mechanisms. For instance, \citet{tian2023scan} and \citet{snell2021approximating} characterized how gradient descent drives attention heads to capture co-occurrence patterns or mimic Seq2Seq algorithms, while \citet{li2023transformerslearntopicstructure} demonstrated the learnability of topic models within a BERT-like framework \citep{devlin2019bert}. 
Building on these foundational settings, recent studies have extended dynamic analysis to more complex distributions. For instance, the staged learning phases in two-mixture linear classification have been detailed \citep{yang2025transformers}. Similarly, in the realm of logical reasoning, theoretical proofs have established how attention and linear layers evolve to solve regular language tasks via Chain-of-Thought \citep{huang2025transformers}.
More recently, the focus has shifted towards In-Context Learning (ICL) as a testbed for understanding dynamics. A substantial body of work has adopted the linear regression setting to theoretically analyze how Transformers implement gradient descent-like algorithms during inference \citep{akyurek2022learning, von2023transformers, mahankali2023one, zhang2024trained}. 
Refining this perspective, recent research demonstrates that Transformers can transcend simple algorithms to learn latent representations, implicitly performing ridge regression to generalize to unseen tasks \citep{yang2024context}.
Parallel to this, mechanistic interpretability research has traced the emergence of specific structural components during training, such as induction heads \citep{olsson2022context, reddy2023mechanistic} and memory retrieval circuits \citep{bietti2023birth, cabannes2024learning}.
Complementing these granular analyses of specific capabilities and circuits, our work seeks to characterize the \textit{macroscopic temporal evolution} of the loss landscape in general pre-training scenarios. We aim to bridge the mechanistic link between the spectral evolution of parameters and the global phenomenon of loss saturation (the two-phase transition), offering a unified spectral perspective that governs the training efficiency of standard language models.

\paragraph{Implicit Regularization and Structural Dynamics}
Theoretical investigations into Transformer optimization have extensively characterized how gradient-based learning induces specific structural properties in weight matrices.
A central theme is the emergence of low-complexity solutions, famously observed as "Rank Collapse" \citep{dong2021attention}, where the effective rank of parameters diminishes over time. This phenomenon is widely interpreted as a form of implicit regularization, where the optimizer naturally favors low-rank or max-margin solutions even without explicit constraints \citep{gunasekar2017implicit, soudry2018implicit, arora2019implicit,neyshabur2017implicit}.
Supporting this view, recent non-asymptotic analysis for next-token prediction establishes that both feed-forward and attention layers converge to such max-margin solutions with linear rates \citep{huang2024non}.
Complementary to this structural view, the dynamic interplay between optimizer sharpness and stability has been analyzed through the "Edge of Stability" framework \citep{cohen2021gradient, ahn2022understanding,damian2022self}.
However, a tension exists between the tendency towards low-rank structures and the empirical observation of monotonic norm growth during pre-training \citep{merrill2021effects}.
Gradient flow theories attempt to reconcile these aspects by modeling training as a process of incremental rank accumulation or balanced flow \citep{saxe2019mathematical, gidel2019implicit}.
Most relevantly, \citet{chen2025condensation} recently employed a gradient flow framework on \textit{linearized} Transformers to identify a transition from parameter condensation to asymptotic rank collapse.
Building on these foundational insights, we extend the analysis to the spectral dynamics of standard attention. We identify the Stability of Singular Distribution (SoSD) not as a final low-rank state, but as an \textit{early-onset} kinetic bottleneck. This perspective provides a mechanistic ground for the "fast-then-slow" two-phase saturation observed in practical pre-training.

\section{The Stability Phenomenon in Singular Distribution}
\label{section_sosd}
\subsection{Experimental Setting}
\textbf{Models and Datasets.} 
We conduct pre-training experiments on two widely adopted decoder-only model families:  
\begin{itemize}[leftmargin=*, topsep=2pt, itemsep=2pt]
    \item \textbf{GPT-2 on FineWeb}: We train GPT-2 Small (124M) and Medium (355M) models on dataset using the highly optimized \texttt{nano-gpt} benchmark training recipe \footnote{\url{https://github.com/KellerJordan/modded-nanogpt}}, establishing a rigorous standard for small-to-medium scale language modeling \citep{radford2019language}.
    \item \textbf{LLaMA on C4}: We train 0.5B and 2B parameter LLaMA models on the Colossal Clean Crawled Corpus (C4) to evaluate architectural scalability \citep{touvron2023llama}.
\end{itemize}
\vspace{-0.4\baselineskip}

\textbf{Optimization Settings. }All models are trained using the AdamW optimizer with hyperparameters $\beta_1=0.9$ and $\beta_2=0.95$, with additional comparative runs using the Muon optimizer specifically for GPT-2 Small. We employ diverse learning rate schedules (Step Decay and Warmup-Stable-Decay for GPT-2, Cosine decay for LLaMA) and ablate weight decay on the LLaMA 0.5B model \citep{loshchilov2017decoupled}. Full details are provided in Appendix \ref{app:exp_details}.


\subsection{The Emergence of Stability of Singular Distribution}


We investigate the convergence trajectories of the parameter matrices $W$ and their associated singular value spectra $\Sigma$ by monitoring their cosine similarity to the final trained states ($T$) across GPT-2 and LLaMA architectures (Figure~\ref{fig_cossim}).

\textbf{Spectral Decoupling.} A distinct divergence in convergence rates is observed across all models. While the parameter alignment $\cos\langle W_t, W_T \rangle$ evolves gradually, indicating that weight entries remain distant from their final configuration, the spectral alignment $\cos\langle \Sigma_t, \Sigma_T \rangle$ rapidly saturates near $1$. This implies that the \textit{relative shape} of the singular value spectrum stabilizes significantly earlier than the parameters themselves. To formalize this decoupling, we introduce the following definition:

\begin{definition}[Singular Distribution, SD]
\label{def:singular_distribution}
Let $\Sigma_t$ denote the diagonal matrix of singular values derived from $W_t$. The \textit{Singular Distribution} is defined as the trace-normalized spectrum: $\hat{\Sigma}_t \triangleq {\Sigma_t}/{\operatorname{tr}(\Sigma_t)}.$

\end{definition}

Since cosine similarity is scale-invariant, the saturation of $\cos\langle \Sigma_t, \Sigma_T \rangle$ is equivalent to the stabilization of $\hat{\Sigma}_t$. We term this phenomenon where the singular distribution achieves stationarity prior to the parameter matrices as the \textbf{Stability of Singular Distribution (SoSD)}.

\subsection{Synchronized Dynamics of SoSD and Training Loss}

\begin{figure*}[t!]
  \vskip 0.2in
  \begin{center}
    \centerline{\includegraphics[width=2\columnwidth]{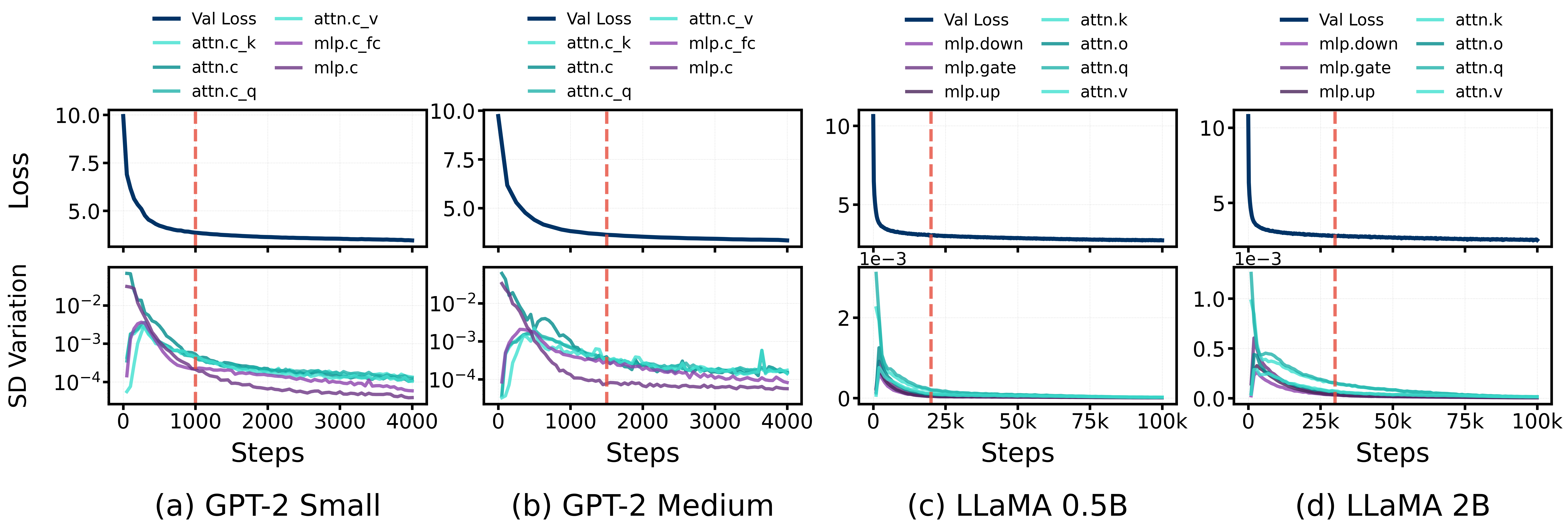}}
    \caption{
        \textbf{Synchronization between Validation Loss and Singular Distribution Variation.} 
        The figures illustrate the training dynamics across four models: 
        \textbf{(a)}~GPT-2 Small, \textbf{(b)}~GPT-2 Medium, \textbf{(c)}~LlaMA 0.5B, and \textbf{(d)}~LlaMA 2B. 
        The \textbf{top row} displays the Validation Loss, while the \textbf{bottom row} shows the Singular Distribution Variation for various projection layers. 
         Vertical red dashed lines mark the approximate steps where singular value matrices stabilize.
    }
    \label{fig_loss_sosd}
  \end{center}
\end{figure*}

To quantify the temporal evolution of the singular spectrum, we define the \textbf{Singular Distribution Variation (SD Variation)} as the magnitude of the update to the normalized spectrum:
$\Delta \Sigma_t := \left\| \hat{\Sigma}_{t+1} - \hat{\Sigma}_{t} \right\|_F$,
where $\hat{\Sigma}_t$ is the singular distribution defined in Definition~\ref{def:singular_distribution}. We investigate the correlation between this spectral variation and the validation loss trajectory in Figure~\ref{fig_loss_sosd}.


Our analysis reveals a tight synchronization between the stability of the singular distribution and the regime of loss descent. Specifically, the training dynamics (exemplified in Figure~\ref{fig_loss_sosd}(a)) decouple into two distinct phases governed by the magnitude of $\Delta \Sigma_t$:

\begin{itemize}[leftmargin=*, topsep=2pt, itemsep=2pt]
    \item \textbf{Phase I: Singular Distribution Restructuring (Fast Descent).}
    In the initial stage ($t < 1000$), $\Delta \Sigma_t$ exhibits a transient impulse, peaking at magnitudes of approximately $10^{-2}$. This period of active spectral restructuring coincides with the rapid decay of the validation loss (from $\sim 10$ to $\sim 4.0$). This suggests that the steepest descent in the loss landscape is mechanistically driven by significant shifts in the relative singular value distribution.

    \item \textbf{Phase II: Singular Distribution Metastability (Slow Descent).}
    For $t > 1000$, $\Delta \Sigma_t$ decays rapidly and settles into a metastable floor ($\approx 10^{-4}$), marking the onset of SoSD. Crucially, this spectral stabilization precisely aligns with the transition of the loss function into the asymptotic, heavy-tailed regime. We note that MLP layers consistently exhibit lower Singular Distribution Variation than Attention layers, yet both components synchronize their entry into the SoSD regime with the saturation of the loss.
\end{itemize}

As we illustrated in Figure~\ref{fig_loss_sosd}, this synchronization is universal across architectures (GPT-2, LLaMA) and modules, indicating that the SoSD is not merely an artifact of specific layers but a global kinetic constraint characterizing the saturation of Transformer pre-training.

\section{Theoretical Analysis}
\label{section_thm}
In this section, we will theoretically analyze the inevitability of SoSD within a single-layer, single-head Transformer setting and investigate the rationale for the synchronized dynamics of SoSD and the Transformer training loss.

\subsection{Setup}

\paragraph{Task and Loss Function}
We consider a sequence classification setting. Let $X \in \mathbb{R}^{n \times d}$ denote the input sequence, where $n$ is the sequence length and $d$ is the feature dimension \citep{tarzanagh2023transformers,tian2023scan}. The corresponding labels are represented by a matrix $Y \in \mathbb{R}^{n \times C}$, where each row $Y_i$ is a one-hot vector indicating the class label at position $i$. Formally, the elements of $Y$ are defined as:
\begin{equation}
    \label{yi_formula}
    Y_{i,c} = \mathbb{I}[c = y_i] = 
    \begin{cases} 
    1, & \text{if } c = y_i, \\
    0, & \text{if } c \neq y_i,
    \end{cases}
\end{equation}
where $y_i \in \{1, \ldots, C\}$ denotes the ground-truth class index for the $i$-th position.

We adopt the cross-entropy loss for training. The empirical risk over the sequence is defined as:
$
    \mathcal{L} = \frac{1}{n} \sum_{i=1}^{n} \ell_i = -\frac{1}{n} \sum_{i=1}^{n} \sum_{c=1}^{C} Y_{i,c} \log P_{i,c},$
where $Y_{i,c}$ denotes the one-hot encoded ground-truth label. 
Since $Y_{i,c} = \mathbb{I}[c = y_i]$ according to (\ref{yi_formula}), the loss can be equivalently simplified to:
\begin{equation}
    \mathcal{L} = -\frac{1}{n} \sum_{i=1}^{n} \log P_{i, y_i}.
\end{equation}

\paragraph{Model Architecture}    \label{model arch}
We employ a 1-layer single-head transformer architecture \citep{chen2025condensation,yang2024context,jiao2025transformers}. Given the input sequence $X \in \mathbb{R}^{n \times d}$, we denote the query, key, and value matrices via linear transformations:
$Q = X W_Q, \quad K = X W_K, \quad V = X W_V,$
where $W_Q, W_K, W_V \in \mathbb{R}^{d \times d}$ are learnable weight matrices.

The attention mechanism is formalized by the score matrix $M$ and the attention matrix $A$:
$$M= \frac{Q K^\top}{\sqrt{d}}, \quad A= \mathrm{softmax}(M) .$$
The resulting hidden representations are then computed as:
$
    H = A V \in \mathbb{R}^{n \times d}.
$

To map these representations to the output space, we introduce a fixed (non-trainable) projection matrix $W_C \in \mathbb{R}^{d \times C}$. The logits $Z$ and output probabilities $P$ are given by:
$$
    Z = H W_C, \quad P = \mathrm{softmax}(Z),
$$
where $Z, P \in \mathbb{R}^{n \times C}$.

\paragraph{Optimization}
We optimize the model parameters using full-batch Gradient Descent (GD) with constant learning rate $\eta>0$. Denote
$\theta_t = \{ W_Q(t), W_K(t), W_V(t) \}.$ 
The parameters are updated according to the gradient descent rule: $$\theta_{t+1} = \theta_t - \eta \nabla_{\theta} \mathcal{L}(\theta_t).$$

\paragraph{Initialization}
We adopt small initialization strategy \citep{yao2025analysis,chen2025condensation,giorlandino2025two}. Specifically, the entries of the parameter matrices at $t=0$ are sampled independently from a Gaussian distribution: $[W]_{i,j} \sim \mathcal{N}(0, \sigma^2), \quad W \in \{W_Q, W_K, W_V\},$
where $\sigma \ll 1$ denotes the standard deviation.

\subsection{Stability of Singular Distribution}

To theoretically characterize the Stability of Singular Distribution (SoSD), we first introduce the necessary technical assumptions regarding the evolution of weight norms and the regularity of the gradients.

\begin{assumption}[Strictly Increasing Norms]
\label{ass:increasing_norm}
The norm of the weight matrices strictly increases over time, i.e.  $\forall W \in \{W_Q, W_K, W_V\}$ and $t$, $\|W(t+1)\|> \|W(t)\|$ is strictly monotonically increasing.
\end{assumption}

\begin{remark}
    Due to the absence of penalties induced by weight decay, the growth of matrix norms remains unconstrained \citep{andriushchenko2023we}. Consequently, weight norms tend to increase monotonically during training, a phenomenon widely observed in deep learning optimization.
\end{remark}

\begin{assumption}[Non-Degeneracy and Gradient Boundedness]
\label{ass:regularity}
We impose the following regularity conditions on the model parameters and gradients:
\begin{enumerate}[leftmargin=*, topsep=2pt, itemsep=2pt]
    \item Bounded Condition Number: \label{item condition number}For all $W \in \{W_Q, W_K, W_V\}$, the matrices are full-rank and their condition numbers are bounded, i.e. there exists a constant $\kappa_\bullet > 0$ such that:
    \begin{equation*}
        \kappa(W_\bullet) := \frac{\sigma_{\max}(W_\bullet)}{\sigma_{\min}(W_\bullet)} \le \kappa_\bullet, \bullet \in \{Q,K,V\}.
    \end{equation*}

    \item Non-vanishing Gradients: \label{item nonvanishing grad}The backpropagated gradient with respect to the hidden representations $H$ is non-vanishing, i.e.
    $\|G_H\| = \left\| \frac{\partial \mathcal{L}}{\partial H} \right\| > 0.$
    
    \item Bounded Gradient Norms: \label{item bounded gradnorm}The gradients with respect to the weight matrices are bounded. There exists a constant $G > 0$ such that for all $W \in \{W_Q, W_K, W_V\}$:
    $\left\| \frac{\partial \mathcal{L}}{\partial W} \right\| \le G.$
\end{enumerate}
\end{assumption}

\begin{remark}
These assumptions are well-grounded in the theoretical analysis of training dynamics. 
Regarding Assumption \ref{ass:regularity}.\ref{item condition number}, parameter matrices are typically full-rank (or nearly full-rank) during the pre-training phase, which ensures that their condition numbers remain bounded. 
Assumption \ref{ass:regularity}.\ref{item nonvanishing grad} holds for any model actively in the learning phase. 
Finally, Assumption \ref{ass:regularity}.\ref{item bounded gradnorm} is theoretically guaranteed within any finite training horizon: since parameters reside in a compact set and the objective function is continuous, gradient norms strictly remain finite despite potential fluctuations.
\end{remark}

\begin{theorem}[Stability of Singular Distribution]
\label{thm:ssd_stability}
Consider the model defined in Section~\ref{model arch}. Under Assumption~\ref{ass:increasing_norm} and Assumption~\ref{ass:regularity}, for stability bound $\varepsilon(W)=O(\frac{\eta}{\|W\|}), W \in \{W_Q,W_K,W_V\}$ , the following hold:

\begin{enumerate}[leftmargin=*, topsep=2pt, itemsep=2pt]
    \item For \textbf{the Value Matrix ($W_V$)}: 
    There exists a threshold time $T_V$ satisfying
    \begin{equation*}
        T_V = O\left(\frac{(1+\sqrt{d})\,\eta G - \varepsilon v(0)}{\varepsilon\, C_V\sqrt{d}}\right),
    \end{equation*}
    such that for all $t \ge T_V$, the singular distribution stabilizes:
    \begin{equation*}
        \left\| \hat{\Sigma}_{t+1}(W_V) - \hat{\Sigma}_{t}(W_V) \right\|_F < \varepsilon(W_V).
    \end{equation*}

    \item For \textbf{Query and Key Matrices ($W \in \{W_Q, W_K\}$)}: 
    There exists a threshold time $T_{QK}$ which scales as:
    \begin{equation*}
    \begin{split}
        T_{QK} = O\Bigg( \bigg( C_0^2 + 2 C_0 \Lambda(\varepsilon) \eta G \bigg)^{1/2} \Bigg),
    \end{split}
    \end{equation*}
    such that for all $t \ge T_{QK}$, the singular distribution stabilizes:
    \begin{equation*}
        \left\| \hat{\Sigma}_{t+1}(W) - \hat{\Sigma}_{t}(W) \right\|_F < \varepsilon(W),
    \end{equation*}
    where $\Lambda(\varepsilon)=\ln(1+\sqrt{d})-\ln(\varepsilon\, q(0)\sqrt{d})$.
\end{enumerate}

Consequently, there exists a sufficiently large constant $C > 0$ such that the global threshold time $T^*$ is defined as:
\begin{equation*}
    T^* = C\max \left\{ 
        \frac{(1+\sqrt{d})\,\eta G}{\varepsilon\, C_V\sqrt{d}}, 
        \sqrt{C_0^2 + 2 C_0 \Lambda(\varepsilon) \eta G} 
    \right\}.
\end{equation*}
For all $t \ge T^*$, the SoSD holds for all matrices $W \in \{W_Q, W_K, W_V\}$.

\end{theorem}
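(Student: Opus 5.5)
The plan has two parts: a generic perturbation bound for the singular distribution, and then, for each weight matrix, a lower bound on the norm growth that forces that bound below $\varepsilon$ after a threshold time.

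\textbf{Step 1 (perturbation lemma).} For any $W$ and update $\Delta W$, write $\hat\Sigma = \Sigma/\operatorname{tr}(\Sigma)$. Decompose
\[
\hat\Sigma_{t+1}-\hat\Sigma_t=\frac{\Sigma_{t+1}-\Sigma_t}{\operatorname{tr}\Sigma_{t+1}}+\Sigma_t\Big(\frac{1}{\operatorname{tr}\Sigma_{t+1}}-\frac{1}{\operatorname{tr}\Sigma_t}\Big).
\]
Apply Mirsky's inequality, $\|\Sigma_{t+1}-\Sigma_t\|_F\le\|\Delta W\|_F$, together with $|\operatorname{tr}\Sigma_{t+1}-\operatorname{tr}\Sigma_t|\le\sqrt d\,\|\Delta W\|_F$. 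This gives
\[
\|\hat\Sigma_{t+1}-\hat\Sigma_t\|_F\le\frac{(1+\sqrt d)\,\|\Delta W\|_F}{\operatorname{tr}\Sigma_{t+1}}\le\frac{(1+\sqrt d)\,\eta G}{\operatorname{tr}\Sigma_{t+1}}.
\]
The last step uses $\|\Delta W\|=\eta\|\partial\mathcal L/\partial W\|\le\eta G$ (Assumption~\ref{ass:regularity}.\ref{item bounded gradnorm}). Since $\operatorname{tr}\Sigma\ge\|W\|$ (nuclear norm dominates spectral/Frobenius, up to $\sqrt d$), the right side is $O(\eta/\|W\|)$, which matches the form of $\varepsilon(W)$. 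The claim therefore reduces to showing that $\operatorname{tr}\Sigma_t(W)$ exceeds $(1+\sqrt d)\eta G/\varepsilon$ after the stated threshold time.

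\textbf{Step 2 (growth for $W_V$).} Define $v(t)=\operatorname{tr}\Sigma_t(W_V)$, or equivalently $\sqrt d\,\sigma_{\min}$, which the bounded condition number makes equivalent up to $\kappa_V$. The gradient is $\partial\mathcal L/\partial W_V = X^\top A^\top G_H$. Assumption~\ref{ass:increasing_norm}, combined with non-vanishing $G_H$ and the bounded condition numbers, should give a uniform per-step increment $v(t+1)-v(t)\ge C_V\sqrt d$. Then $v(t)\ge v(0)+C_V\sqrt d\,t$ is linear in $t$. Solving $v(t)\ge(1+\sqrt d)\eta G/\varepsilon$ yields $T_V=\big((1+\sqrt d)\eta G-\varepsilon v(0)\big)/(\varepsilon C_V\sqrt d)$.

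\textbf{Step 3 (growth for $W_Q,W_K$).} Here the gradient passes through the softmax Jacobian, so it involves the product of the other matrix, $\partial\mathcal L/\partial W_Q\propto X^\top(\cdots)XW_K/\sqrt d$. The growth of $q(t)$ is therefore coupled to $k(t)$ and to saturation of attention. I would aim for a differential-type inequality of the form $q(t+1)-q(t)\ge c\,q(t)/(\text{something growing in }t)$, or equivalently a bound on $\ln q$. This would give $\ln q(t)-\ln q(0)\gtrsim (t^2-C_0^2)/(2C_0\eta G)$. Requiring $\ln q(t)\ge\ln\big((1+\sqrt d)/(\varepsilon\sqrt d)\big)$ is exactly $t^2\ge C_0^2+2C_0\Lambda(\varepsilon)\eta G$, which yields $T_{QK}$ with the stated $\Lambda$. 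The balancedness of $W_Q^\top W_Q-W_K W_K^\top$ under GD (approximately conserved for small $\eta$ and small init) would let me treat $q\approx k$.

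\textbf{Step 4 (combination) and main obstacle.} Take $T^*$ as $C$ times the maximum of the two thresholds and absorb the $v(0)$ term and the constants into $C$. The main obstacle is Step 3's lower bound on the per-step growth of $q$. Assumption~\ref{ass:increasing_norm} gives only strict monotonicity, not a rate. I would first try to extract a rate from non-vanishing $G_H$ plus the condition-number bound, by lower-bounding $\langle W_Q,-\partial\mathcal L/\partial W_Q\rangle$. If that fails, I would absorb the needed rate into the constants $C_V$ and $C_0$ as defined quantities, stating them as the minimal increments along the trajectory.
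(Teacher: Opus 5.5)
\paragraph{Where the proposal breaks down.}
Your Step~1 is sound and is essentially the paper's Lemma~\ref{lem:4} (Lemma~\ref{lem:1} plus Mirsky). Your decomposition even gives the denominator $\operatorname{tr}\Sigma_{t+1}$ directly, and the reduction to ``$\operatorname{tr}\Sigma_t$ exceeds $(1+\sqrt d)\eta G/\varepsilon$'' is correct.

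The genuine gap is Step~3. A recursion $q(t+1)-q(t)\ge c\,q(t)/g(t)$ with $g$ growing only certifies $\ln q(t)-\ln q(0)\ge\sum_{s<t}\ln\bigl(1+c/g(s)\bigr)$. That is at most linear in $t$, so it cannot yield the $t^2$ you then write. Your bound $\ln q(t)-\ln q(0)\gtrsim (t^2-C_0^2)/(2C_0\eta G)$ is fitted to the formula for $T_{QK}$, not derived from it.

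A $t^2$ in the exponent requires the relative rate $\dot q/q$ to grow linearly in $t$, and the missing idea is where that comes from. The paper works in the small-initialization regime and linearizes the softmax (Lemma~\ref{lem:8}): $A\approx A_0+J_1M$, with the \emph{constant} Jacobian $J_1=\frac1n\bigl(I-\frac1n\mathbf 1\mathbf 1^\top\bigr)$. This gives $G^Q=\frac{1}{\sqrt d}X^\top J_1^\top G_HV^\top K$. Combined with $\lambda_{\min}(X)$, $\sigma_{\min}(W_K)$, $\sigma_{\min}(W_V)$ and the condition-number bounds, it yields $\|G^Q\|_F\ge C_Q\|W_K\|_F\|W_V\|_F$, and symmetrically for $G^K$.

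So the $Q$-gradient carries a factor $\|W_V\|_F$. Taking $q=k$ and $C_Q=C_K=C_M$, and feeding in $v(t)\ge C_Vt+v(0)$ from Part~1, gives $\dot q\ge C_M\bigl(C_Vt+v(0)\bigr)q$. Hence $q(t)\ge q(0)\exp\bigl(\tfrac12C_MC_Vt^2+C_Mv(0)t\bigr)$, and $T_{QK}$ is the positive root of the resulting quadratic in $t$ (in the paper's proof, $C_0=C_Mv(0)$).

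Your appeal to ``saturation of attention'' points the wrong way. By Lemma~\ref{lem:7}, saturation shrinks $\|J(a_i)\|_2$ and would slow the growth of $q$. Saturation enters the paper only in Phase~II of Theorem~\ref{thm:two_phase}.

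\paragraph{Step 2 and the fallback.}
Step~2 likewise lacks the mechanism behind $C_V$. The paper uses $A\approx A_0$ (with $A_0A_0^\top=A_0$) and full-rank $XX^\top$ to get $\|G^V\|_F\gtrsim\frac{1}{\sqrt n}\lambda_{\min}(X)\|G_H\|_F=:C_V$. It then reads this as the growth law $\dot v\gtrsim C_V$ for $v=\|W_V\|_F$.

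Your objection is fair: strict monotonicity gives no rate, and a lower bound on $\|\nabla_W\mathcal L\|$ does not control $\langle W,-\nabla_W\mathcal L\rangle$. It applies to that continuous-time step of the paper as well. But your fallback, defining $C_V$ and $C_0$ as the minimal increments along the trajectory, simply assumes the conclusion for $W_V$. For $W_Q,W_K$, a constant minimal increment, however defined, still gives only linear growth of $q$ or of $\ln q$. It cannot give the $t^2$ exponent that produces the square-root form of $T_{QK}$; that needs a time-varying growth law coupled to $v$.

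A minor point: under gradient flow, the conserved quantity for $M\propto XW_QW_K^\top X^\top$ is $W_Q^\top W_Q-W_K^\top W_K$, not $W_Q^\top W_Q-W_KW_K^\top$. The paper simply assumes $q(0)=k(0)$ and $C_Q=C_K$.
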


The proof is provided in Appendix \ref{proof_for_thm1}. Theorem \ref{thm:ssd_stability} establishes the mechanistic origin of SoSD. Under small initialization, the relatively small parameter norms allow for rapid SD Variation. However, as training progresses, the monotonic growth of weight norms (Assumption \ref{ass:increasing_norm}) progressively suppresses the relative magnitude of gradient updates. This scaling effect naturally enforces a tighter bound on SD variation ($\varepsilon \propto \eta / \|W\|$), effectively compelling the system into the SoSD regime.



\subsection{2-Phase Analysis of Training Loss}

\begin{definition}[Gap Function]
\label{def:gap_function}
Let $u \in \mathbb{R}^m$. Define the index of the maximum element as $j^* := \arg\max_{j} u_j$. The gap function $\mathrm{gap}(\cdot)$ is defined as the difference between the largest and the second-largest entries:
$\mathrm{gap}(u) := u_{j^*} - \max_{j \neq j^*} u_j \ge 0.$
\end{definition}

\begin{assumption}[Smoothness and Margin Conditions]
\label{ass:smoothness_margin}
We posit the following regularity conditions regarding the loss landscape and asymptotic margins:
\begin{enumerate}[leftmargin=*, topsep=2pt, itemsep=2pt]
    \item Smoothness: \label{betasmooth}There exists $T_{\beta}$ such that for all $t > T_{\beta}$, the loss function is locally $\beta$-smooth, i.e. for all $t > T_{\beta}$, for all $W \in \{W_Q, W_K, W_V\}$:
    $\|\nabla_W \mathcal{L}(W(t)) - \nabla_W \mathcal{L}(W(t+1))\| \le \beta \|W(t) - W(t+1)\|.$


    \item Attention Margin. \label{attnmargin}
    For each $t > \max\{T^*, T_\beta\}$, denote
    $
    \bar{M} := \frac{M}{\|W_Q\|_* \|W_K\|_*}.
    $
    We assume that the attention mechanism separates the tokens with a positive margin: 
    $
    \gamma_{\min}
    :=
    \min_i \operatorname{gap}\big(\bar{M}_{i,:}\big)
    > 0.
    $
    
    \item Logit Margin. \label{logitmargin}
    For each $t > \max\{T^*, T_\beta\}$, denote $\bar{Z} := \frac{Z}{\|W_V\|_*}$. For the $i$-th token, define its normalized logit margin as $\omega_i := \bar{Z}_{i,y_i} - \max_{c \neq y_i} \bar{Z}_{i,c}$. We assume that the minimum logit margin is strictly positive: $\omega_{\min} := \min_i \omega_i > 0$.
\end{enumerate}
\end{assumption}



\begin{remark}
Assumption \ref{ass:smoothness_margin}.\ref{betasmooth} is a standard local smoothness condition used to control the loss landscape along the training trajectory once it has moved beyond the initial transient; the threshold $T_{\beta}$ is an analytical threshold for invoking this local regularity, not a tunable quantity or an exact prediction of the empirical phase boundary. 
 Assumption \ref{ass:smoothness_margin}.\ref{attnmargin} and Assumption \ref{ass:smoothness_margin}.\ref{logitmargin} are phase-specific margin conditions required only for the Phase II slow-descent analysis, i.e., for $t>\max\{T^*,T_{\beta}\}$. 
They are not assumed to hold from initialization or throughout the entire training trajectory. These margin conditions are motivated by the saturation property of the softmax function and the asymptotic behavior of cross-entropy minimization, which promote token-level concentration and class separation in the late training regime.
\end{remark}

\begin{theorem}[Two-Phase Dynamics]
\label{thm:two_phase}
Denote the loss decrease as $\Delta \mathcal{L}(t) := \mathcal{L}_t - \mathcal{L}_{t+1}$. Consider the model architecture defined in Section~\ref{model arch}. Under Assumption~\ref{ass:increasing_norm} and Assumption~\ref{ass:smoothness_margin}, the training dynamics exhibit two distinct phases characterizing the relationship between the loss decrease and the singular distribution:

\begin{enumerate}[leftmargin=*, topsep=2pt, itemsep=2pt]
    \item \textbf{Phase I (Fast Descent):} There exists a time $T_f \le T^*$ such that for all $t \le T_f$, the loss decrease satisfies:
    \begin{equation*}
        \Delta \mathcal{L}(t)  \ge \frac{3 D^2 \eta}{2(1+\sqrt{d})} = O(1).
    \end{equation*}

    \item \textbf{Phase II (Slow Descent):} There exists $T_s := \max\{T^*, T_\beta\}$ and $p > 2$ such that for all $t > T_s$, the singular distribution stabilizes within an $\varepsilon$-neighborhood:
    $\left\| \hat{\Sigma}_{t+1} - \hat{\Sigma}_{t} \right\|_F < \varepsilon.$
    Then, the loss decrement is polynomially bounded by the magnitude of SoSD:
    \begin{equation*}
        \Delta \mathcal{L}(t) \le O\!\left( \left\| \hat{\Sigma}_{t+1} - \hat{\Sigma}_{t} \right\|_F^{\,p} \right) = O(\varepsilon^{p}).
    \end{equation*}
\end{enumerate}
Here, $T^*$ is the global threshold time given by Theorem~\ref{thm:ssd_stability}.
\end{theorem}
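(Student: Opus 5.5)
The plan is to treat the two phases separately. Both rest on the descent lemma for gradient descent, together with the norm-growth mechanism already used in Theorem~\ref{thm:ssd_stability}.

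\textbf{Phase I.} I would start from a one-step expansion of the loss along the GD update,
\[
\mathcal{L}_{t+1}=\mathcal{L}_t-\eta\|\nabla\mathcal{L}_t\|^2+R_t ,
\]
where $R_t$ is a second-order remainder. Early on the weights are small because of the small initialization. The softmax layers are then nearly uniform, so the Hessian of $\mathcal{L}$ is controlled by the products $\|W_\bullet\|$, and its curvature constant is at most of order $(1+\sqrt d)/\eta$ up to constants. With this I would bound $R_t\le \eta\|\nabla\mathcal{L}_t\|^2/4$ or similar.

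For the gradient term, I would use Assumption~\ref{ass:regularity}.\ref{item nonvanishing grad} through the chain rule $\nabla_{W_V}\mathcal{L}=X^\top A^\top G_H$. With the conditioning of $X$ and of the attention matrix $A$ this gives $\|\nabla\mathcal{L}_t\|\ge D$ for some constant $D>0$. Combining the two pieces yields $\Delta\mathcal{L}(t)\ge \tfrac{3D^2\eta}{2(1+\sqrt d)}$.

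I would define $T_f$ as the last time this lower bound persists. Since Theorem~\ref{thm:ssd_stability} forces the SD variation below $\varepsilon$ after $T^*$, and (by Phase II below) that caps the loss decrease, the fast regime cannot extend past $T^*$. Hence $T_f\le T^*$.

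\textbf{Phase II.} For $t>T_s$ I would use $\beta$-smoothness (Assumption~\ref{ass:smoothness_margin}.\ref{betasmooth}) to get
\[
\Delta\mathcal{L}(t)\le \langle\nabla\mathcal{L}_t,\theta_t-\theta_{t+1}\rangle+\tfrac\beta2\|\theta_{t+1}-\theta_t\|^2\le\eta\|\nabla\mathcal{L}_t\|^2(1+\beta\eta/2).
\]
The task therefore reduces to showing $\|\nabla\mathcal{L}_t\|^2=O(\varepsilon^{p})$ with $p>2$.

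The key is the margin assumptions. Write $M=\|W_Q\|_*\|W_K\|_*\bar M$. By the attention margin $\gamma_{\min}$, each off-argmax attention weight is at most $e^{-\gamma_{\min}\|W_Q\|_*\|W_K\|_*}$, so the softmax Jacobian $\operatorname{diag}(a)-aa^\top$ is exponentially small in the norm product. Likewise, the logit margin $\omega_{\min}$ makes $1-P_{i,y_i}\le C e^{-\omega_{\min}\|W_V\|_*}$. Consequently both $\mathcal{L}_t$ and every block of the gradient are bounded by a polynomial in the norms times $e^{-c\min(\|W_V\|_*,\|W_Q\|_*\|W_K\|_*)}$.

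Next I would link this to $\varepsilon$. By the stability bound in Theorem~\ref{thm:ssd_stability}, $\varepsilon\asymp\eta/\|W\|$, so $\|W\|\gtrsim\eta/\varepsilon$. The bounded condition number (Assumption~\ref{ass:regularity}.\ref{item condition number}) lets me pass between the operator norm, the Frobenius norm, and the nuclear norm $\|\cdot\|_*$ at the cost of factors of $d$ and $\kappa$. The gradient is then bounded by $\mathrm{poly}(\eta/\varepsilon)\,e^{-c\eta/\varepsilon}$. Since $e^{-c/x}$ is $o(x^k)$ for every $k$, this is $O(\varepsilon^{p})$ for any fixed $p>2$. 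Finally, I would rewrite $\varepsilon$ as $\|\hat\Sigma_{t+1}-\hat\Sigma_t\|_F$ using the bound realized in Theorem~\ref{thm:ssd_stability}.

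\textbf{Main obstacle.} The hardest step is this last substitution: replacing $\varepsilon$ by the actual variation $\|\hat\Sigma_{t+1}-\hat\Sigma_t\|_F$. Theorem~\ref{thm:ssd_stability} only gives an upper bound on the variation, whereas here the variation must be comparable to $\eta/\|W\|$, i.e.\ we need a matching lower bound. My first attempt would lower-bound it via Weyl and the trace-normalization derivative, using non-vanishing gradients in a direction not proportional to $W$. Failing that, I would state the result in terms of $\varepsilon$ directly, which the statement's final equality $O(\varepsilon^p)$ permits.
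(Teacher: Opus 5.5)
Your Phase~II is essentially the paper's argument. It uses the upper half of Lemma~\ref{lem:5} and the margins, which turn $\|G_Z\|_F$ and each softmax Jacobian into factors $e^{-\omega_{\min}\tau^V}$ and $e^{-\gamma_{\min}\tau^Q\tau^K/\sqrt d}$ (you dropped the $1/\sqrt d$ coming from $M=QK^\top/\sqrt d$). It then uses $\varepsilon\asymp\eta/\tau$ to turn $\mathrm{poly}(\tau)\,e^{-c\tau}$ into $O(\varepsilon^p)$. The obstacle you flag is real, and the paper does not resolve it either; it simply substitutes $\varepsilon=O(\eta/\tau)$, so the $O(\varepsilon^p)$ form is what is actually established.

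The genuine gap is in Phase~I, at the step $\|\nabla\mathcal L_t\|\ge D$. Assumption~\ref{ass:regularity}.\ref{item nonvanishing grad} gives only $\|G_H\|>0$ at each $t$, with no uniform constant. It cannot supply one on its own, because it is stated for all $t$, whereas once smoothness holds, Lemma~\ref{lem:5} and $\mathcal L\ge 0$ force $\sum_t\|\nabla\mathcal L_t\|^2<\infty$. So the lower bound must come from something specific to the early phase, and the only such input you use, small initialization, works against you:
\begin{itemize}
\item There $A\approx\frac1n\mathbf 1\mathbf 1^\top$ is rank one, so there is no conditioning of $A$ to exploit.
\item Consequently $X^\top A^\top G_H\approx\frac1n X^\top\mathbf 1\,(\mathbf 1^\top G_H)$, which vanishes whenever the column sums of $G_H=\frac1n(P-Y)W_C^\top$ vanish (e.g.\ near-uniform $P$ with class-balanced labels), even though $\|G_H\|_F>0$.
\item The $W_Q,W_K$ gradients are no help, since they carry factors $\|W_K\|\,\|W_V\|$ and $\|W_Q\|\,\|W_V\|$, which are tiny there.
\end{itemize}
Your curvature step is also unsupported. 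You are right that Assumption~\ref{ass:smoothness_margin}.\ref{betasmooth} does not cover early times (the paper applies Lemma~\ref{lem:5} there anyway). However, the $W_V$ block of the Hessian is of size $\|AX\|^2\|W_C\|^2$ times the cross-entropy Hessian and does not shrink with the weights. What you need is a step-size condition such as $\eta\beta\le 1$ on the relevant region, not a small-weight argument.

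The missing idea is the coupling that gives the theorem its content: your Phase~I never involves the singular distribution. The paper inverts Lemma~\ref{lem:4}: from $\delta_\bullet(t)\le(1+\sqrt d)\,\eta\,\|G^\bullet_t\|_F/\tau^\bullet_t$ it gets $\|G^\bullet_t\|_F\ge\tau^\bullet_t\delta_\bullet(t)/((1+\sqrt d)\eta)$. It plugs this into the lower half of Lemma~\ref{lem:5} with $\eta=1/\beta$. It then characterizes the early phase by the hypothesis that the trace-scaled SD variation satisfies $\max_\bullet\tau^\bullet_t\delta_\bullet(t)\ge D$; this is where $D$ and the factor $1+\sqrt d$ come from. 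That lower bound is posited for the restructuring phase rather than derived. The sound repair is therefore to adopt it (or another explicit early-phase hypothesis) instead of trying to extract $D$ from Assumption~\ref{ass:regularity}.

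Finally, your argument for $T_f\le T^*$ is both flawed and unnecessary. Phase~II only applies for $t>T_s=\max\{T^*,T_\beta\}$, which may exceed $T^*$, and its cap contradicts the Phase~I bound only once $\varepsilon$ is small relative to $D$. But once the lower bound holds on an initial segment, you can simply truncate $T_f$ at $T^*$.
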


The proof is provided in Appendix \ref{proof_thm2}.




\begin{figure*}[h]
  \vskip 0.2in
  \begin{center}
    \centerline{\includegraphics[width=2\columnwidth]{lrdecay_wsd_cosine_1.png}}
    \caption{
 \textbf{Singular Distribution Variation under different Learning Rate Schedules.} 
    In all subplots, the left y-axis represents the Validation Loss (dark blue line), and the right y-axis denotes the Singular Distribution Variation for various projection layers (colored lines).
    \textbf{(a) Learning Rate Decay on GPT-2 Small.} 
   The plot displays the training dynamics over 20k steps. Vertical red dashed lines at steps 10.2k and 15.3k indicate the points where the learning rate decays by a factor of 10. The inset provides a magnified view of the trajectories around the first decay point. 
    \textbf{(b) Warmup Stable Decay on GPT-2 Small.} 
    A focused view of the interval between steps 3000 and 5100. The vertical red dashed line at step 3650 marks the beginning of the WSD.
    \textbf{(c) Cosine Decay on LLaMA 0.5B.} 
    The plot spans 80k training steps, with a vertical red dashed line indicating the onset of the SoSD.
    }
    \label{fig_lrdecay_wsd}
  \end{center}
\end{figure*}

Theorem \ref{thm:two_phase} formalizes the \textbf{Two-Phase Dynamics} of pre-training by linking loss reduction to SD Variation:
\begin{itemize}[leftmargin=*, topsep=2pt, itemsep=2pt]
\item \textbf{Fast Descent Phase:} Initially, moderate parameter norms permit significant spectral shifts. In this regime, the loss decrease $\Delta \mathcal{L}(t)$ is lower-bounded by a constant order term ($O(1)$), enabling rapid optimization and active feature learning.
\item \textbf{Slow Descent Phase:} As the system transitions into SoSD (triggered by norm growth per Theorem \ref{thm:ssd_stability}), the loss reduction becomes kinetically constrained. The descent rate is no longer governed by gradient magnitude alone but is theoretically bounded by the variation of the singular distribution ($\Delta \mathcal{L} \le O(\varepsilon^p)$). Consequently, the stabilization of $\hat{\Sigma}_t$ acts as a bottleneck, forcing the loss into an asymptotic plateau.
\end{itemize}

\section{Interpreting Pre-training Strategies via SoSD Dynamics}
\label{section_interpreting}
In this section, we interpret the effectiveness of established pre-training strategies through the lens of SoSD. We posit that their empirical success stems from their ability to modulate the \textit{stability bound} derived in Theorem \ref{thm:ssd_stability}, thereby relaxing the kinetic constraints on loss reduction.

\begin{figure*}[t]
  \vskip 0.2in
  \begin{center}
    \centerline{\includegraphics[width=2\columnwidth]{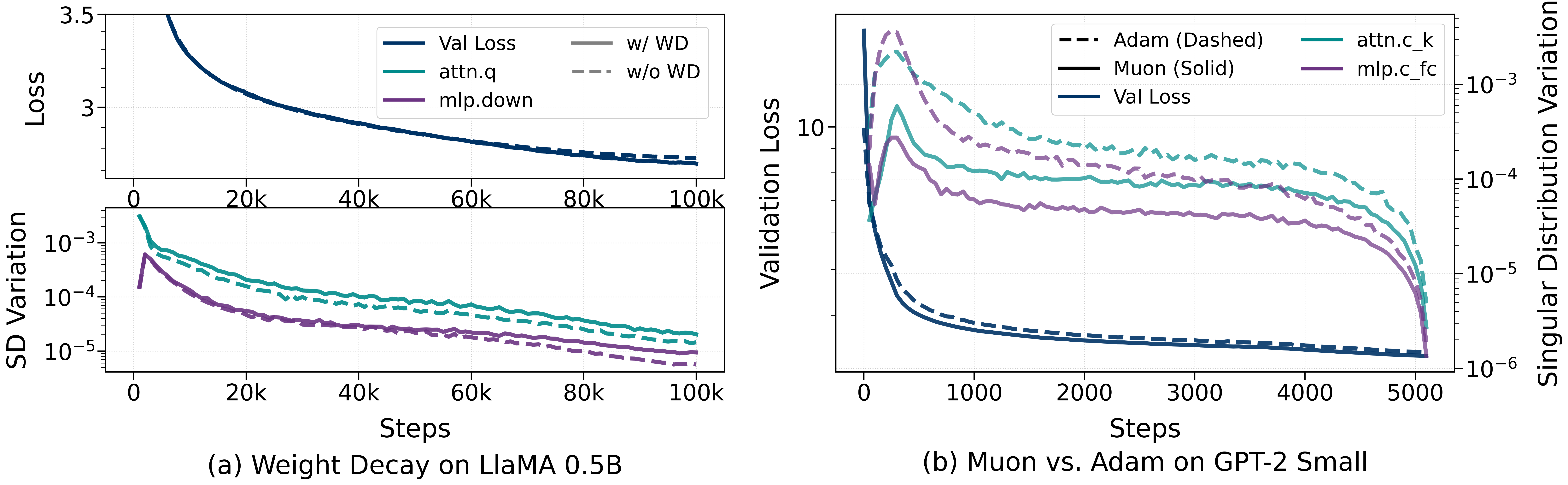}}
    \caption{
      \textbf{(a) Impact of Weight Decay on LlaMA 0.5B.} Comparison of training dynamics with (solid lines) and without (dashed lines) weight decay. The model with weight decay achieves lower validation loss, yet exhibits consistently higher SoSD values compared to the baseline without weight decay. \textbf{(b) Muon vs. Adam on GPT-2 Small.} Comparison between Muon and Adam optimizers. Muon demonstrates faster convergence and lower loss, while maintaining significantly lower SoSD values compared to Adam throughout the training process. For visual clarity, only representative matrices are displayed; others exhibit consistent behaviors.
    }
    \label{WD_and_muon}
  \end{center}
\end{figure*}

\subsection{Learning Rate Schedule}

We analyze three distinct learning rate (LR) schedules: Step-wise Decay, Warmup-Stable-Decay (WSD), and Cosine Decay, to demonstrate how scheduling $\eta_t$ directly controls the SoSD threshold.

\textbf{Step-wise Learning Rate Decay.}
Figure \ref{fig_lrdecay_wsd}(a) illustrates the dynamics of GPT-2 Small under a step-wise decay strategy. At steps 10.2k and 15.3k, the learning rate is reduced by a factor of 10. Coinciding with these discrete reductions, the SoSD metric ($\Delta \Sigma$) exhibits an immediate, sharp decline (see inset), synchronized with a renewed descent in validation loss.


Theoretically, this aligns with the stability bound $\varepsilon \propto \eta / \|W\|$ (Theorem \ref{thm:ssd_stability}). The abrupt reduction in $\eta$ tightens the permissible variation of the singular distribution. This forces the system out of its previous quasi-equilibrium state, allowing the model to resolve finer-grained features and settle into a new SoSD regime with a lower spectral noise floor, thereby minimizing the loss further.

\textbf{Continuous Annealing (WSD and Cosine Decay).}
In contrast to the discrete shifts of step decay, WSD (decay phase) and Cosine Decay employ continuous annealing.
\begin{itemize}[leftmargin=*, topsep=2pt, itemsep=2pt]
\item \textbf{Warmup-Stable-Decay (WSD):} Figure \ref{fig_lrdecay_wsd}(b) highlights the critical transition at step 3650. During the constant LR phase, the singular distribution oscillates around a fixed magnitude, and the loss stagnates. However, once the linear decay commences, the SoSD metric tracks the reduction of $\eta$, decreasing monotonically. This continuous relaxation of the stability bound prevents the model from locking into a static spectral configuration, enabling sustained loss improvement.

\item \textbf{Cosine Decay:} Similarly, under Cosine Decay (Figure \ref{fig_lrdecay_wsd}(c)), the smooth reduction of $\eta_t$ induces a gradual tightening of the stability bound $\varepsilon_t$. Consequently, the system maintains a \textit{dynamic} stability rather than a static equilibrium, driving a steady and persistent reduction in loss without the abrupt phase transitions observed in step-wise schedules.
\end{itemize}


\paragraph{Takeaway.}
From a spectral perspective, learning rate scheduling is equivalent to \textbf{scheduling the stability bound of SoSD}. By reducing $\eta$, these strategies progressively tighten the constraint on singular value variation, thereby mitigating the saturation effects imposed by SoSD and extending the effective optimization window.


\subsection{Weight Decay}

As shown in Figure~\ref{WD_and_muon}(a), the trajectory utilizing Weight Decay (WD) maintains consistently higher SoSD levels compared to the baseline. In terms of validation loss, both models exhibit similar performance during the initial 60k steps; however, beyond this phase, the WD model achieves a lower final loss compared to the non-WD baseline. This behavior aligns with our derived stability bound $\varepsilon \propto \eta / \|W\|$. By suppressing the growth of the weight norm $\|W\|$, WD relaxes the stability constraint (increasing $\varepsilon$) \citep{touvron2023llama}. According to Theorem~\ref{thm:two_phase}, this relaxed bound permits larger updates to the singular value distribution, effectively raising the theoretical upper bound for loss reduction. Consequently, by mitigating the growth rate of the weight norm, WD sustains the necessary optimization dynamics to achieve better convergence.

\paragraph{Takeaway} 
Our findings indicate that a lower SoSD does not essentially imply better performance. While the previous section showed that reducing the learning rate decreases SoSD alongside the loss, the Weight Decay experiment reveals the opposite: a higher SoSD coexists with a lower loss. Therefore, SoSD acts as a characterization of singular value dynamics rather than a direct performance metric. Its behavior serves as an indicator, signaling when the model has entered a regime of slow descent.

\subsection{Muon/Adam Optimizer}

Figure \ref{WD_and_muon}(b) presents the comparative training dynamics of Muon (solid lines) and Adam (dashed lines) under the Warmup-Stable-Decay (WSD) schedule \citep{jordan2024muon}. Regarding the Validation Loss, the Muon curve remains consistently below that of Adam, indicating that Muon achieves lower loss values at equivalent training steps. In terms of Singular Distribution Variation, both optimizers exhibit initial peaks; however, upon entering the SoSD phase, the variation produced by Muon is significantly lower than that of Adam, with a magnitude difference of approximately one order ($10^{-4}$ vs. $10^{-3}$). 


\paragraph{Takeaway} The emergence of SoSD in Muon-trained models confirms that the transition to Singular Distribution stability is a fundamental characteristic of Transformer pre-training, invariant to the choice of optimizer. However, the distinct scales of variation suggest that different optimizers navigate the spectral landscape with varying degrees of efficiency.





\section{Conclusion and Discussion}

In this work, we have investigated the intrinsic mechanisms governing the characteristic "fast-then-slow" convergence pattern in Large Language Model pre-training. By shifting the analytical lens to the temporal dynamics of spectral evolution, we identified the Stability of Singular Distribution (SoSD) phenomenon. Our empirical analysis across GPT-2 and LLaMA families demonstrates that the singular distribution stabilizes significantly earlier than the parameter matrices, with this onset of stability tightly synchronizing with the transition of the validation loss into the saturation regime.

Theoretically, we established a framework that proves the emergence of SoSD under conditions of non-degeneracy and gradient boundedness. We further revealed the dynamic coupling between loss descent and singular distribution variation, showing that the onset of SoSD theoretically bounds the subsequent loss reduction. Finally, we applied this spectral perspective to interpret standard pre-training strategies including learning rate schedule, weight decay and optimizer. We showed that learning rate schedules and weight decay effectively facilitate optimization by manipulating the stability bound $\varepsilon \propto \eta / \|W\|$, thereby allowing the model to overcome the spectral barrier to achieve lower training loss. Our work offers a novel mechanistic understanding of Transformer optimization trajectory, linking microscopic spectral behaviors to macroscopic training dynamics.

\section*{Impact Statement}
This study seeks to foster progress in deep learning research, with particular attention to deepening insights into and refining pre-training methodologies for Large Language Models (LLMs). While the potential broader societal implications of this work are acknowledged, no specific societal impacts have been identified at present that warrant explicit emphasis.

\section*{Acknowledgements}
This work was funded by the Strategic Priority Research Program of the Chinese Academy of Sciences (Grant No. XDB0680101), CAS Project for Young Scientists in Basic Research under Grant No. YSBR-034, the National Key Research and Development Program of China under Grants No. 2023YFA1011602, and Xiaomi Young Talents Program.


\bibliography{example_paper}
\bibliographystyle{icml2026}

\newpage
\appendix
\onecolumn

\section{Proofs of Auxiliary Lemmas}
\label{app:lemma-proofs}

\begin{lemma}\label{lem:1}
Let $a,b \in \mathbb{R}^n$, and define
$\alpha = \|a\|_1, \quad \beta = \|b\|_1,$
with $\alpha \neq \beta$.
Define the mapping $f : \mathbb{R}^n \to \mathbb{R}^n$ by
$f(x) = \frac{x}{\|x\|_1}.$
Then,
\begin{equation}\label{eq:l1-lipschitz}
\|f(a) - f(b)\|_2
\le
\frac{1+\sqrt{n}}{\min\{\alpha,\beta\}} \, \|a-b\|_2.
\end{equation}
\end{lemma}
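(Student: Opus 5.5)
The plan is a direct add-and-subtract argument. Without loss of generality assume $\alpha \le \beta$, so $\min\{\alpha,\beta\} = \alpha$; the case $\beta<\alpha$ is symmetric after swapping the roles of $a$ and $b$. The condition $\alpha\neq\beta$ plays no role beyond what is needed; we only need $\alpha,\beta>0$ so that $f$ is defined. We then write
\begin{equation*}
f(a)-f(b) = \frac{a}{\alpha}-\frac{b}{\beta} = \frac{a-b}{\alpha} + b\left(\frac{1}{\alpha}-\frac{1}{\beta}\right) = \frac{a-b}{\alpha} + \frac{b}{\beta}\cdot\frac{\beta-\alpha}{\alpha}.
\end{equation*}
The triangle inequality then splits $\|f(a)-f(b)\|_2$ into two terms, which we bound separately.

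The first term is immediate: $\|a-b\|_2/\alpha$. For the second term we need three facts.
\begin{itemize}
\item $\|b\|_2 \le \|b\|_1 = \beta$, so $\|b/\beta\|_2 \le 1$.
\item By the reverse triangle inequality for the $\ell_1$ norm, $|\beta-\alpha| = \big|\|b\|_1-\|a\|_1\big| \le \|a-b\|_1$.
\item By Cauchy--Schwarz, $\|a-b\|_1 \le \sqrt{n}\,\|a-b\|_2$.
\end{itemize}
Together these give the second term at most $\sqrt{n}\,\|a-b\|_2/\alpha$. Summing the two pieces yields $(1+\sqrt{n})\|a-b\|_2/\alpha$, which is exactly \eqref{eq:l1-lipschitz}.

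The only step needing care is choosing the decomposition so that the denominator that survives is the smaller norm $\alpha$. Pulling out $1/\alpha$ on the $(a-b)$ term and normalizing $b$ by its own norm $\beta$ does this. It ensures $\|b\|_2/\beta\le 1$ is used rather than a ratio like $\beta/\alpha$, which could be unbounded. Everything else is the standard norm comparisons $\|\cdot\|_2\le\|\cdot\|_1\le\sqrt{n}\|\cdot\|_2$.
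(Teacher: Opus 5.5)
Your proposal is correct and follows the paper's proof essentially verbatim: the same split $\frac{a-b}{\alpha} + b\bigl(\frac{1}{\alpha}-\frac{1}{\beta}\bigr)$, followed by $\|b\|_2\le\beta$ and $|\beta-\alpha|\le\|a-b\|_1\le\sqrt{n}\,\|a-b\|_2$, then symmetry in $a$ and $b$. One small remark: your assumption $\alpha\le\beta$ is never used, since this decomposition yields the factor $1/\alpha$ for either ordering; combining it with the symmetric bound gives $(1+\sqrt{n})/\max\{\alpha,\beta\}$, which is stronger than the stated $\min$ version.
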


\begin{proof}[Proof of lemma ~\ref{lem:1}]
We have
\begin{equation}
\begin{aligned}
\|f(a) - f(b)\|_2
&= \left\| \frac{a}{\alpha} - \frac{b}{\beta} \right\|_2 \\
&\le \left\| \frac{a}{\alpha} - \frac{b}{\alpha} \right\|_2
    + \left\| \frac{b}{\alpha} - \frac{b}{\beta} \right\|_2 ,
\end{aligned}
\end{equation}
where $\alpha = \|a\|_1$ and $\beta = \|b\|_1$.

For the first term, we have
\begin{equation}
\left\| \frac{a}{\alpha} - \frac{b}{\alpha} \right\|_2
= \frac{1}{\alpha} \|a - b\|_2 .
\end{equation}

For the second term, we obtain
\begin{equation}
\left\| \frac{b}{\alpha} - \frac{b}{\beta} \right\|_2
= \frac{|\beta - \alpha|}{\alpha \beta} \|b\|_2 .
\end{equation}

Note that
\begin{equation}
|\beta - \alpha|
= \big| \|b\|_1 - \|a\|_1 \big|
\le \|b - a\|_1
\le \sqrt{n} \|b - a\|_2 ,
\end{equation}
and
\begin{equation}
\|b\|_2 \le \|b\|_1 = \beta .
\end{equation}

Combining the above inequalities, we have
\begin{equation}
\begin{aligned}
\|f(a) - f(b)\|_2
&\le \frac{1}{\alpha} \|a - b\|_2
+ \frac{\beta - \alpha}{\alpha} \\
&\le \frac{1}{\alpha} \|a - b\|_2
+ \frac{\sqrt{n}}{\alpha} \|b - a\|_2 \\
&= \frac{1 + \sqrt{n}}{\alpha} \|b - a\|_2 .
\end{aligned}
\end{equation}

By symmetry (interchanging $a$ and $b$), we conclude that
\begin{equation}
\|f(a) - f(b)\|_2
\le \frac{1 + \sqrt{n}}{\min\{\alpha, \beta\}} \|b - a\|_2 .
\end{equation}
\end{proof}

\begin{lemma}[Mirsky's inequality: stability of singular values]
\label{lem:2}
Let $W_t, W_{t+1} \in \mathbb{R}^{m \times n}$, and let their singular value decompositions be
$W_t = U_t \Sigma_t V_t^*,
\quad
W_{t+1} = U_{t+1} \Sigma_{t+1} V_{t+1}^*.$
Then,
\begin{equation}\label{eq:mirsky}
\|\Sigma_{t+1} - \Sigma_t\|_F
\le
\|W_{t+1} - W_t\|_F.
\end{equation}
\end{lemma}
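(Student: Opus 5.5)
The plan is to derive Mirsky's inequality from the Hoffman--Wielandt inequality for Hermitian matrices. We apply it to the Jordan--Wielandt (Hermitian dilation) embedding
\[
\mathcal{H}(W) = \begin{pmatrix} 0 & W \\ W^* & 0 \end{pmatrix} \in \mathbb{C}^{(m+n)\times(m+n)} .
\]
Let $p=\min\{m,n\}$. The eigenvalues of $\mathcal{H}(W)$ are $\pm\sigma_1(W),\dots,\pm\sigma_p(W)$ together with $|m-n|$ zeros. This follows by pairing $(u_i, v_i)$ with $(u_i,-v_i)$ from the SVD $W=U\Sigma V^*$.

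We also need $\|\mathcal{H}(W_{t+1})-\mathcal{H}(W_t)\|_F^2 = 2\|W_{t+1}-W_t\|_F^2$. This holds because $\mathcal{H}$ is linear and the off-diagonal blocks appear twice.

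The key step is the Hoffman--Wielandt inequality. For Hermitian $A,B$ with eigenvalues sorted decreasingly, $\lambda_1\ge\dots$ and $\mu_1\ge\dots$, it states $\sum_k (\lambda_k-\mu_k)^2 \le \|A-B\|_F^2$. I would prove it directly as follows.
\begin{itemize}
\item Write $A=P\Lambda P^*$ and $B=R M R^*$.
\item Expand $\|A-B\|_F^2 = \|\Lambda\|_F^2+\|M\|_F^2 - 2\operatorname{Re}\operatorname{tr}(\Lambda S M S^*)$, where $S=P^*R$ is unitary.
\item This gives $\operatorname{tr}(\Lambda S M S^*) = \sum_{i,j}\lambda_i\mu_j |S_{ij}|^2$. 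The matrix $(|S_{ij}|^2)$ is doubly stochastic.
\item By Birkhoff's theorem the maximum of this linear functional over doubly stochastic matrices is attained at a permutation.
\item By the rearrangement inequality that permutation is the identity when both sequences are sorted decreasingly. Hence $\|A-B\|_F^2 \ge \sum_k(\lambda_k-\mu_k)^2$.
\end{itemize}

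Apply this with $A=\mathcal{H}(W_{t+1})$ and $B=\mathcal{H}(W_t)$. Both spectra sorted decreasingly read $\sigma_1\ge\dots\ge\sigma_p\ge 0\ge\dots\ge 0\ge -\sigma_p\ge\dots\ge-\sigma_1$, so the sorted matching pairs $\sigma_k(W_{t+1})$ with $\sigma_k(W_t)$ and $-\sigma_k(W_{t+1})$ with $-\sigma_k(W_t)$. Therefore
\[
2\sum_{k=1}^p \big(\sigma_k(W_{t+1})-\sigma_k(W_t)\big)^2 \le 2\|W_{t+1}-W_t\|_F^2 .
\]
Dividing by 2 gives $\|\Sigma_{t+1}-\Sigma_t\|_F\le\|W_{t+1}-W_t\|_F$. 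Here $\Sigma_t$ is understood with singular values in the canonical decreasing order.

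The main obstacle is the Hoffman--Wielandt step, specifically the doubly-stochastic/Birkhoff argument and the sorting convention. One must check that the decreasing ordering of the dilation's spectrum really matches the $+\sigma_k$ and $-\sigma_k$ halves consistently, with the zero block in the middle. Everything else is bookkeeping. If preferred, the Hoffman--Wielandt inequality can simply be cited as a classical result rather than reproved.
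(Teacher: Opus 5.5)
Your proof is correct, but it takes a different route from the paper. The paper works with $W_t, W_{t+1}$ directly. It expands $\|W_{t+1}-W_t\|_F^2=\sum_i\sigma_i^2(W_{t+1})+\sum_i\sigma_i^2(W_t)-2\operatorname{tr}(W_{t+1}^\top W_t)$. It then bounds the cross term by von Neumann's trace inequality $\operatorname{tr}(W_{t+1}^\top W_t)\le\sum_i\sigma_i(W_{t+1})\sigma_i(W_t)$, which is its Lemma~\ref{lem:3}, stated without proof. The right-hand side then collapses to $\sum_i(\sigma_i(W_{t+1})-\sigma_i(W_t))^2$ in two lines. You instead pass to the Jordan--Wielandt dilation and prove Hoffman--Wielandt for Hermitian matrices via unistochastic matrices, Birkhoff, and rearrangement.

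The two arguments are closer than they look. Since $\operatorname{tr}(\mathcal{H}(A)\mathcal{H}(B))=2\operatorname{Re}\operatorname{tr}(A^*B)$, and the sorted dilation spectra give $\sum_k\lambda_k\mu_k=2\sum_{k}\sigma_k(A)\sigma_k(B)$, your Hermitian trace bound applied to the dilations is exactly von Neumann's inequality. In effect you reprove the paper's cited lemma and then perform the same expansion.

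The paper's route buys brevity, at the cost of resting on an unproved classical inequality. Yours is self-contained modulo Birkhoff and rearrangement, which are more elementary than von Neumann for rectangular matrices. It also handles complex entries and $m\neq n$ explicitly. The price is the dilation bookkeeping: the factor $2$ and the $|m-n|$ zeros in the middle of both sorted spectra, which you correctly check line up.

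Your remark on the ordering convention is also worthwhile. The inequality needs $\Sigma_t$ and $\Sigma_{t+1}$ sorted in the same order. The paper ensures this only implicitly, and its Lemma~\ref{lem:3} even indexes singular values increasingly.
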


\begin{proof}[Proof of Lemma~\ref{lem:2}]
We have
\begin{equation}
\begin{aligned}
\|W_{t+1} - W_t\|_F^2
&= \operatorname{tr}\!\left( (W_{t+1} - W_t)^\top (W_{t+1} - W_t) \right) \\
&= \operatorname{tr}(W_{t+1}^\top W_{t+1})
   - 2 \operatorname{tr}(W_{t+1}^\top W_t)
   + \operatorname{tr}(W_t^\top W_t) \\
&= \sum_{i=1}^n \sigma_i^2(W_{t+1})
   + \sum_{i=1}^n \sigma_i^2(W_t)
   - 2 \operatorname{tr}(W_{t+1}^\top W_t).
\end{aligned}
\end{equation}

By the von Neumann trace inequality in Lemma~\ref{lem:3},
\begin{equation}
\operatorname{tr}(W_{t+1}^\top W_t)
\le \sum_{i=1}^n \sigma_i(W_{t+1}) \sigma_i(W_t).
\end{equation}

Therefore,
\begin{equation}
\begin{aligned}
\|W_{t+1} - W_t\|_F^2
&\ge
\|\Sigma_{t+1}\|^2_F
+ \|\Sigma_{t}\|^2_F
- 2 \sum_{i=1}^n \sigma_i(W_{t+1}) \sigma_i(W_t) \\
&=
\sum_{i=1}^n
\big( \sigma_i(W_{t+1}) - \sigma_i(W_t) \big)^2 \\
&=
\|\Sigma_{t+1} - \Sigma_t\|_F^2 .
\end{aligned}
\end{equation}

Hence,
\begin{equation}
\|\Sigma_{t+1} - \Sigma_t\|_F
\le
\|W_{t+1} - W_t\|_F .
\end{equation}
\end{proof}

\begin{lemma}[Von Neumann's inequality]
\label{lem:3}
For matrices $A, B \in \mathbb{R}^{m \times n}$, we have
\begin{equation}\label{eq:von-neumann}
\text{tr}(A^\top B) \leq \sum_{i=1}^{n} \sigma_i(A) \sigma_i(B),
\end{equation}
where $\sigma_{\min\{m,n\}}(A) \geq \dots  \geq \sigma_2(A)  \geq \sigma_1(A) \geq 0$ and $\sigma_{\min\{m,n\}}(B) \geq \dots \geq \sigma_2(B)  \geq \sigma_1(B)  \geq 0$ are the singular values of $A$ and $B$, respectively.
\end{lemma}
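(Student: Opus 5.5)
We need to prove von Neumann's trace inequality: $\operatorname{tr}(A^\top B)\le\sum_i\sigma_i(A)\sigma_i(B)$ with the singular values of $A$ and $B$ sorted in the same order. The plan is to reduce the trace to a bilinear form in the singular values via the SVD, and then bound that form by a doubly substochastic argument combined with rearrangement.

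\textbf{Step 1: diagonalize.} Write $A=U_A\Sigma_A V_A^\top$ and $B=U_B\Sigma_B V_B^\top$ using full SVDs with orthogonal $U$'s and $V$'s. Set $P:=U_A^\top U_B$ and $Q:=V_A^\top V_B$; both are orthogonal. Then
\[
\operatorname{tr}(A^\top B)=\operatorname{tr}(V_A\Sigma_A^\top U_A^\top U_B\Sigma_B V_B^\top)=\operatorname{tr}(\Sigma_A^\top P\Sigma_B Q^\top)=\sum_{i,j} a_i b_j P_{ij}Q_{ij},
\]
where $a_i,b_j$ are the diagonal entries and the sum runs over $i,j\le k:=\min\{m,n\}$.

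\textbf{Step 2: pass to a doubly substochastic matrix.} Define $S_{ij}:=|P_{ij}Q_{ij}|\le\tfrac12(P_{ij}^2+Q_{ij}^2)$. Restricted to the $k\times k$ block, row sums satisfy $\sum_j S_{ij}\le \tfrac12\bigl(\sum_j P_{ij}^2+\sum_j Q_{ij}^2\bigr)\le 1$ because the rows of an orthogonal matrix are unit vectors; the column sums obey the same bound. Hence $\operatorname{tr}(A^\top B)\le\sum_{i,j}a_ib_jS_{ij}$ with $S$ doubly substochastic and $a,b\ge0$.

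\textbf{Step 3: maximize over substochastic $S$ (the main obstacle).} The function $S\mapsto\sum a_ib_jS_{ij}$ is linear, so its maximum over doubly substochastic matrices is attained at an extreme point. One route is to dominate $S$ entrywise by a doubly stochastic matrix and then apply Birkhoff's theorem; since $a,b\ge0$, increasing entries only increases the objective. The extreme points are then permutation matrices, which reduces the problem to $\max_\pi\sum_i a_ib_{\pi(i)}$. The rearrangement inequality then shows this maximum is attained by pairing the sorted orders alike.

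To avoid Birkhoff, I would try an Abel summation argument instead. Write $a_i=\sum_{r\ge i}(a_r-a_{r+1})$ with $a_{k+1}=0$, and similarly for $b$. Since the paper orders singular values increasingly, these differences have the appropriate sign for the sorted order. The expression then becomes a nonnegative combination of block sums $\sum_{i\in I,\,j\in J}S_{ij}$ over top-index blocks. Each such block sum is at most $\min\{|I|,|J|\}$ by the row and column bounds, and the identity matrix attains exactly this value. This yields $\sum a_ib_jS_{ij}\le\sum_i a_ib_i$, which is the claim.
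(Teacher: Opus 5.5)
The paper gives no proof of this lemma. It is stated in the appendix and used as the classical von Neumann trace inequality, invoked in the proof of Lemma~\ref{lem:2}. Your proposal therefore supplies something the paper omits, and it is correct: it is the standard doubly-substochastic argument. The reduction $\operatorname{tr}(A^\top B)=\sum_{i,j\le k}a_ib_jP_{ij}Q_{ij}$ holds. The bound $|P_{ij}Q_{ij}|\le\frac12(P_{ij}^2+Q_{ij}^2)$ gives row and column sums at most $1$ on the $k\times k$ block. Either ending then goes through.

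Of your two endings, the Birkhoff one relies on a classical but nontrivial fact: a square doubly substochastic matrix is entrywise dominated by a doubly stochastic one (equivalently, the extreme points of the substochastic polytope are subpermutation matrices). That ending therefore needs a citation. The Abel-summation ending is self-contained, since $\sum_{i\le r,\,j\le s}S_{ij}\le\min\{r,s\}$ follows directly from the row and column bounds, and $S=I$ attains it.

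Citing buys the paper brevity. Your argument buys a self-contained appendix and handles the rectangular case cleanly. Summing to $k=\min\{m,n\}$ also repairs the paper's upper limit $n$, which only makes sense under the convention $\sigma_i=0$ for $i>k$.

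One correction is needed in the Abel step. Under the paper's ascending labelling $\sigma_1\le\cdots\le\sigma_k$, the weights $a_r-a_{r+1}$ in $a_i=\sum_{r\ge i}(a_r-a_{r+1})$ are nonpositive for $r<k$, contrary to what you assert. The expansion is then not a nonnegative combination of block sums, so you cannot apply the blockwise upper bound term by term. There are two easy fixes:
\begin{itemize}
\item Relabel both spectra in decreasing order first; the target $\sum_i a_ib_i$ is unchanged when both orders are reversed together.
\item Use $a_i=\sum_{r\le i}(a_r-a_{r-1})$ with $a_0=0$. Its weights are nonnegative, and its blocks are $\{r,\dots,k\}\times\{s,\dots,k\}$.
\end{itemize}
You should also state explicitly in Step~1 that both SVDs are chosen with their diagonals sorted in the same order, since the final pairing depends on it.
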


\begin{lemma}\label{lem:4}
Let $\{W_t\}_{t=0}^T$ denote the trajectory generated by gradient descent,
with $W_t \in \mathbb{R}^{d \times d}$.
\begin{equation}
  W_{t+1} = W_t - \eta \nabla \mathcal{L}(W_t)
= W_t - \eta G_t .  
\end{equation}

For each $t = 0,1,\dots,T$, let
$W_t = U_t \Sigma_t V_t^{\top}$
be the singular value decomposition of $W_t$, where $\Sigma_t$ denotes the diagonal matrix of singular values.
Then the following inequality holds:
\begin{equation}\label{eq:normalized-svd-lipschitz}
\left\|
\frac{\Sigma_{t+1}}{\operatorname{tr}(\Sigma_{t+1})}
-
\frac{\Sigma_t}{\operatorname{tr}(\Sigma_t)}
\right\|_F
\le
\frac{1+\sqrt{d}}{\min\!\left\{\operatorname{tr}(\Sigma_t),\,\operatorname{tr}(\Sigma_{t+1})\right\}}
\,
\|W_{t+1} - W_t\|_F
\le
\frac{1+\sqrt{d}}{\min\!\left\{\operatorname{tr}(\Sigma_t),\,\operatorname{tr}(\Sigma_{t+1})\right\}}
\,\eta \|G_t\|_F .
\end{equation}
\end{lemma}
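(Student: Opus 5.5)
The statement is Lemma~\ref{lem:4}, and it follows by composing the two preceding lemmas with the gradient descent update. The plan is to view the diagonal of singular values of $W_t$ as a vector $s_t \in \mathbb{R}^d$ with nonnegative entries, so that $\operatorname{tr}(\Sigma_t) = \|s_t\|_1$. The normalized spectrum $\Sigma_t/\operatorname{tr}(\Sigma_t)$ is then exactly $f(s_t)$ for the map $f(x)=x/\|x\|_1$ of Lemma~\ref{lem:1}. The Frobenius norm of a difference of diagonal matrices equals the Euclidean norm of the difference of their diagonals.

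First, apply Lemma~\ref{lem:1} with $a = s_{t+1}$, $b = s_t$ and $n = d$. This gives
\[
\left\|\frac{\Sigma_{t+1}}{\operatorname{tr}(\Sigma_{t+1})}-\frac{\Sigma_t}{\operatorname{tr}(\Sigma_t)}\right\|_F \le \frac{1+\sqrt d}{\min\{\operatorname{tr}(\Sigma_t),\operatorname{tr}(\Sigma_{t+1})\}}\,\|\Sigma_{t+1}-\Sigma_t\|_F .
\]
Lemma~\ref{lem:1} formally assumes $\alpha\neq\beta$. I would justify dropping this restriction in one of two ways. Under Assumption~\ref{ass:increasing_norm}, the nuclear norms differ. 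Alternatively, one observes that the proof of Lemma~\ref{lem:1} never divides by $\alpha-\beta$, so the bound holds trivially when $\alpha=\beta$. Both traces are strictly positive by the full-rank part of Assumption~\ref{ass:regularity}, so the right-hand side is well defined.

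Second, invoke Lemma~\ref{lem:2} (Mirsky) to replace $\|\Sigma_{t+1}-\Sigma_t\|_F$ by $\|W_{t+1}-W_t\|_F$. The ordering convention matters here: both spectra must be sorted consistently, which is the convention under which von Neumann's inequality (Lemma~\ref{lem:3}) gives the matched pairing. This yields the first inequality.

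Finally, substitute the update rule $W_{t+1}-W_t=-\eta G_t$, which gives $\|W_{t+1}-W_t\|_F=\eta\|G_t\|_F$ and hence the second inequality.

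The only step needing care is checking the hypotheses of Lemma~\ref{lem:1}: the $\alpha\neq\beta$ condition and the positivity of the traces. Everything else is a direct chaining of the two lemmas.
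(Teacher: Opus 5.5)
Your proposal is correct and follows the paper's proof: Lemma~\ref{lem:1} applied to the vectors of singular values, then Mirsky's inequality (Lemma~\ref{lem:2}), then $W_{t+1}-W_t=-\eta G_t$. Your hypothesis checks are sound with one caveat: the appeal to Assumption~\ref{ass:increasing_norm} only shows the traces differ if its unspecified norm is the nuclear norm, but your other observation (the proof of Lemma~\ref{lem:1} never uses $\alpha\neq\beta$) already settles that point, and positive traces need only $W_t, W_{t+1}\neq 0$, not full rank.
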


\begin{proof}[Proof of lemma~\ref{lem:4}]
For each $W_t$, let
$W_t = U_t \Sigma_t V_t^{\top},$
and define
$p_t = \frac{\Sigma_t}{\operatorname{tr}(\Sigma_t)}.$

By Lemma~\ref{lem:1}, the normalization mapping
$f(X) = \frac{X}{\operatorname{tr}(X)}$
is Lipschitz continuous. Therefore, we have
\begin{equation}\label{eq:proof-normalized-bound}
\left\|
\hat{\Sigma}_{t+1} - \hat{\Sigma}_t
\right\|_F
\le
\frac{1+\sqrt{d}}{\min\!\left\{\operatorname{tr}(\Sigma_t),\,\operatorname{tr}(\Sigma_{t+1})\right\}}
\,
\|\Sigma_{t+1} - \Sigma_t\|_F .
\end{equation}

Using lemma~\ref{lem:2}, we further obtain
$\|\Sigma_{t+1} - \Sigma_t\|_F
\le
\|W_{t+1} - W_t\|_F .$

Combining the above inequalities yields
$\left\|
\hat{\Sigma}_{t+1} - \hat{\Sigma}_t
\right\|_F
\le
\frac{1+\sqrt{d}}{\min\!\left\{\operatorname{tr}(\Sigma_t),\,\operatorname{tr}(\Sigma_{t+1})\right\}}
\,
\|W_{t+1} - W_t\|_F .$

Finally, since $W_{t+1} - W_t = -\eta G_t$, we obtain
$\left\|
\frac{\Sigma_{t}}{tr(\Sigma_{t})} - \frac{\Sigma_{t+1}}{tr(\Sigma_{t+1})}
\right\|_F
\le
\frac{1+\sqrt{d}}{\min\!\left\{\operatorname{tr}(\Sigma_t),\,\operatorname{tr}(\Sigma_{t+1})\right\}}
\,\eta \|G_t\|_F .$
\end{proof}

\begin{lemma}[Lipschitz Gradient and Gradient Descent]\label{lem:5}
Assume the loss function $\mathcal{L}$ is $\beta$-smooth, i.e., there exists $\beta > 0$ such that for all
$\theta_1, \theta_2 \in \mathbb{R}^{d \times d}$,
$\|\nabla \mathcal{L}(\theta_1) - \nabla \mathcal{L}(\theta_2)\|
\le
\beta \|\theta_1 - \theta_2\|.$
Consider full-batch gradient descent with update rule
$\theta_{t+1} = \theta_t - \eta \nabla \mathcal{L}(\theta_t).$
Then the following inequalities hold:
\begin{enumerate}
\item
\begin{equation}
\eta\!\left(1 - \frac{\eta \beta}{2}\right)
\|\nabla \mathcal{L}(\theta_t)\|^2
\le
\mathcal{L}(\theta_t) - \mathcal{L}(\theta_{t+1}),
\end{equation}
\item
\begin{equation}
\eta\!\left(1 + \frac{\eta \beta}{2}\right)
\|\nabla \mathcal{L}(\theta_t)\|^2
\ge
\mathcal{L}(\theta_t) - \mathcal{L}(\theta_{t+1}).
\end{equation}
\end{enumerate}
\end{lemma}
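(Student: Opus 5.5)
The statement is the standard descent lemma for a $\beta$-smooth function, and both bounds come from the same second-order Taylor-type estimate. First I would establish the quadratic sandwich
\begin{equation*}
\bigl|\mathcal{L}(\theta_2) - \mathcal{L}(\theta_1) - \langle \nabla \mathcal{L}(\theta_1), \theta_2 - \theta_1\rangle\bigr| \le \frac{\beta}{2}\|\theta_2-\theta_1\|^2
\end{equation*}
for all $\theta_1,\theta_2$. Here $\langle\cdot,\cdot\rangle$ is the Frobenius inner product and $\|\cdot\|$ is the Frobenius norm, matching the norm in the smoothness hypothesis.

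To prove the sandwich, I parametrize the segment $\theta(s)=\theta_1+s(\theta_2-\theta_1)$ for $s\in[0,1]$. Since $\mathcal{L}$ has a Lipschitz gradient, $s\mapsto\mathcal{L}(\theta(s))$ is $C^1$, and the fundamental theorem of calculus gives
\begin{equation*}
\mathcal{L}(\theta_2)-\mathcal{L}(\theta_1)=\int_0^1\langle\nabla\mathcal{L}(\theta(s)),\theta_2-\theta_1\rangle\,ds .
\end{equation*}
I then subtract $\langle\nabla\mathcal{L}(\theta_1),\theta_2-\theta_1\rangle$ from both sides. Applying Cauchy--Schwarz together with $\|\nabla\mathcal{L}(\theta(s))-\nabla\mathcal{L}(\theta_1)\|\le\beta s\|\theta_2-\theta_1\|$ bounds the remainder in absolute value by $\int_0^1\beta s\,ds\,\|\theta_2-\theta_1\|^2=\frac{\beta}{2}\|\theta_2-\theta_1\|^2$.

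Next I substitute $\theta_1=\theta_t$ and $\theta_2=\theta_{t+1}=\theta_t-\eta\nabla\mathcal{L}(\theta_t)$. Then $\langle\nabla\mathcal{L}(\theta_t),\theta_{t+1}-\theta_t\rangle=-\eta\|\nabla\mathcal{L}(\theta_t)\|^2$ and $\|\theta_{t+1}-\theta_t\|^2=\eta^2\|\nabla\mathcal{L}(\theta_t)\|^2$, so the sandwich reads
\begin{equation*}
-\eta\|\nabla\mathcal{L}(\theta_t)\|^2-\frac{\beta\eta^2}{2}\|\nabla\mathcal{L}(\theta_t)\|^2\le\mathcal{L}(\theta_{t+1})-\mathcal{L}(\theta_t)\le-\eta\|\nabla\mathcal{L}(\theta_t)\|^2+\frac{\beta\eta^2}{2}\|\nabla\mathcal{L}(\theta_t)\|^2 .
\end{equation*}
Negating gives $\eta(1-\eta\beta/2)\|\nabla\mathcal{L}(\theta_t)\|^2\le\mathcal{L}(\theta_t)-\mathcal{L}(\theta_{t+1})\le\eta(1+\eta\beta/2)\|\nabla\mathcal{L}(\theta_t)\|^2$, which are items 1 and 2.

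There is no real obstacle; the only points to take care over are the following.
\begin{itemize}
\item Norms must be used consistently: the argument needs the norm in the smoothness assumption to be the one induced by the inner product, i.e.\ the Frobenius norm.
\item When $\theta$ stacks the three matrices $W_Q,W_K,W_V$, the same argument applies verbatim on the product space.
\item If smoothness is only assumed locally along the trajectory (as in Assumption~\ref{ass:smoothness_margin}), the Lipschitz bound must hold on the whole segment $[\theta_t,\theta_{t+1}]$, not just at its endpoints. The lemma as stated assumes global smoothness, so this holds automatically here. When the lemma is invoked later, I would either strengthen the local assumption to cover segments or note that this is implicitly required.
\end{itemize}
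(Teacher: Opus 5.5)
Your proof is correct. It is the standard descent-lemma argument: the fundamental theorem of calculus along the segment, Cauchy--Schwarz with $\|\nabla\mathcal{L}(\theta(s))-\nabla\mathcal{L}(\theta_1)\|\le\beta s\|\theta_2-\theta_1\|$, and then substituting the gradient step. The paper states Lemma~\ref{lem:5} without proof and relies on exactly this standard fact.

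Your last caveat is well founded. Assumption~\ref{ass:smoothness_margin}.\ref{betasmooth} only compares gradients at consecutive iterates, so it does not control $\nabla\mathcal{L}$ on the interior of $[\theta_t,\theta_{t+1}]$. It therefore does not by itself justify the later uses of the lemma in the proof of Theorem~\ref{thm:two_phase}.
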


\begin{lemma}\label{lem:6}
For any vector $u \in \mathbb{R}^m$, let
$j^* = \arg\max_j u_j,$
and define the gap as
$\operatorname{gap}(u) = u_{j^*} - \max_{j \neq j^*} u_j \ge 0.$
Let
$s = \operatorname{softmax}(u).$
Then the following hold:
\begin{enumerate}
\item
\begin{equation}
1 - s_{j^*} \le (m-1)\exp\!\left(-\operatorname{gap}(u)\right).
\end{equation}
\item
For all $j \neq j^*$,
\begin{equation}
s_j \le \exp\!\left(-\operatorname{gap}(u)\right).
\end{equation}
\end{enumerate}
\end{lemma}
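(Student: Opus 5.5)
The statement is Lemma~\ref{lem:6}. The plan is to write the softmax explicitly and compare every term to the dominant one. Fix $u\in\mathbb{R}^m$ and $j^*=\arg\max_j u_j$, and set $g=\operatorname{gap}(u)$. Dividing numerator and denominator by $e^{u_{j^*}}$ gives
\begin{equation*}
s_j=\frac{e^{u_j-u_{j^*}}}{1+\sum_{k\neq j^*}e^{u_k-u_{j^*}}}.
\end{equation*}
By the definition of the gap (Definition~\ref{def:gap_function}), every $k\neq j^*$ satisfies $u_k\le \max_{k'\neq j^*}u_{k'}=u_{j^*}-g$. Hence $e^{u_k-u_{j^*}}\le e^{-g}$ for all $k\neq j^*$.

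For item 2, take $j\neq j^*$. The denominator is at least $1$, since all the summed terms are positive, and the numerator is at most $e^{-g}$. So $s_j\le e^{-g}$ follows at once.

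For item 1, I would use $1-s_{j^*}=\sum_{j\neq j^*}s_j$, which holds because the softmax entries sum to one. Applying item 2 to each of the $m-1$ terms gives $1-s_{j^*}\le (m-1)e^{-g}$. A slightly sharper alternative is to write $1-s_{j^*}=S/(1+S)\le S$ with $S=\sum_{k\neq j^*}e^{u_k-u_{j^*}}\le (m-1)e^{-g}$; this yields the same bound.

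The only delicate point is ties. If the argmax is not unique, then $g=0$ and both bounds are trivial: $s_j\le 1$, and $1-s_{j^*}\le 1\le m-1$ whenever $m\ge 2$. For $m=1$ the sum over $j\neq j^*$ is empty, so both sides of item 1 are zero. I would mention these cases briefly; the argument has no real obstacle beyond them.
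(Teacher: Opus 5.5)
Your proof is correct and matches the paper's argument: each off-argmax entry is bounded by $s_j \le e^{u_j - u_{j^*}} \le e^{-\operatorname{gap}(u)}$ by keeping only the dominant term in the denominator (your ``denominator at least $1$'' after dividing by $e^{u_{j^*}}$), and item 1 follows by summing these $m-1$ bounds via $1 - s_{j^*} = \sum_{j \neq j^*} s_j$. Your separate treatment of ties is harmless but not needed, since $u_k \le u_{j^*} - \operatorname{gap}(u)$ holds for every $k \neq j^*$ whether or not the maximizer is unique.
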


\begin{proof}[Proof of Lemma~\ref{lem:6}]
For any $j \neq j^*$, by the definition of the gap we have
\begin{equation}\label{eq:gap-ineq}
u_j - u_{j^*} \le -\operatorname{gap}(u).
\end{equation}
Therefore,
\begin{equation}\label{eq:sj-gap}
s_j
=
\frac{\exp(u_j)}{\sum_{i=1}^m \exp(u_i)}
\le
\frac{\exp(u_j)}{\exp(u_{j^*})}
=
\exp(u_j - u_{j^*})
\le
\exp\!\left(-\operatorname{gap}(u)\right),
\end{equation}
where the last inequality follows from \eqref{eq:gap-ineq}.

As a consequence,
\begin{equation}\label{eq:one-minus-sjstar}
1 - s_{j^*}
=
\sum_{j \neq j^*} s_j
\le
(m-1)\exp\!\left(-\operatorname{gap}(u)\right).
\end{equation}
\end{proof}

\begin{lemma}\label{lem:7}
Let $a = \operatorname{softmax}(u)$ with $u \in \mathbb{R}^m$.  
The Jacobian matrix of $a$ is given by
\begin{equation}\label{eq:softmax-jacobian}
J(a) = \operatorname{diag}(a) - a a^\top .
\end{equation}
Then $J(a)$ is positive semidefinite, and
\begin{equation}\label{eq:jacobian-norm}
\|J(a)\|_2 \le \operatorname{tr}(J(a)) = 1 - \|a\|_2^2 \le 2(1 - a_{\max}),
\end{equation}
where
\begin{equation}\label{eq:amax-def}
a_{\max} = \max_j a_j .
\end{equation}
\end{lemma}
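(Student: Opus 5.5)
The plan is to verify the three claims in sequence: the Jacobian formula, positive semidefiniteness, and then the chain of norm inequalities. For the formula, differentiate $a_i = e^{u_i}/\sum_k e^{u_k}$ directly. This gives $\partial a_i/\partial u_j = a_i\delta_{ij} - a_i a_j$, which is exactly \eqref{eq:softmax-jacobian}. In particular, $J(a)$ is symmetric.

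For positive semidefiniteness I will compute the quadratic form for arbitrary $x\in\mathbb{R}^m$:
\begin{equation*}
x^\top J(a) x = \sum_{i} a_i x_i^2 - \Big(\sum_i a_i x_i\Big)^2 .
\end{equation*}
Since $a_i\ge 0$ and $\sum_i a_i = 1$, this is the variance of the random variable taking value $x_i$ with probability $a_i$. Hence it is nonnegative; equivalently, Cauchy--Schwarz gives $(\sum_i \sqrt{a_i}\cdot\sqrt{a_i}x_i)^2 \le \sum_i a_i\sum_i a_i x_i^2$.

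For the norm bound, $J(a)$ is symmetric PSD, so $\|J(a)\|_2 = \lambda_{\max}(J(a))$. Every eigenvalue is nonnegative, so the largest is at most their sum, $\operatorname{tr}(J(a))$. The trace is $\sum_i a_i - \sum_i a_i^2 = 1 - \|a\|_2^2$. For the last inequality I will drop all terms but the largest, $\|a\|_2^2 \ge a_{\max}^2$. This gives
\begin{equation*}
1-\|a\|_2^2 \le 1 - a_{\max}^2 = (1-a_{\max})(1+a_{\max}) \le 2(1-a_{\max}),
\end{equation*}
where the final step uses $a_{\max}\le 1$ and $1-a_{\max}\ge 0$.

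No step is a genuine obstacle here. The only point needing care is justifying $\|J\|_2\le \operatorname{tr}(J)$, which relies on PSD-ness (nonnegative eigenvalues) and is why PSD-ness is established first.
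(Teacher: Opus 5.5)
Your proof is correct, and the norm chain is identical to the paper's: $\|J(a)\|_2=\lambda_{\max}\le\operatorname{tr}(J(a))=1-\|a\|_2^2\le 1-a_{\max}^2=(1+a_{\max})(1-a_{\max})\le 2(1-a_{\max})$.

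The one real difference is the positive-semidefiniteness step, and there your version is the sound one. The paper expands $x^\top aa^\top x$ as $\sum_i a_i^2x_i^2$, dropping the cross terms. From this it concludes $x^\top J(a)x=\sum_i(a_i-a_i^2)x_i^2\ge 0$ using only $a_i\le 1$. That expansion is wrong, since $x^\top aa^\top x=(\sum_i a_ix_i)^2$. For instance, with $x=\mathbf{1}$ the true value of $x^\top J(a)x$ is $\sum_i a_i-(\sum_i a_i)^2=0$. The paper's expression is instead strictly positive for any softmax output with $m\ge 2$.

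Moreover, $a_i\le 1$ by itself cannot give positive semidefiniteness. For $a=(1,1)$ one gets $I-\mathbf{1}\mathbf{1}^\top$, which has eigenvalue $-1$. Your variance (equivalently Cauchy--Schwarz) argument uses the normalization $\sum_i a_i=1$, which is exactly what makes $J(a)\succeq 0$, so your write-up repairs this step. You also derive the Jacobian formula, which the paper only asserts.
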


\begin{proof}[proof of lemma~\ref{lem:7}]
(1) \text{To show that } $J(a) \succeq 0$.

For any $\mathbf{x} \in \mathbb{R}^m$, we have
\begin{equation}
\begin{aligned}
\mathbf{x}^T J(a) \mathbf{x}
&= \mathbf{x}^T \operatorname{diag}(a) \mathbf{x}
   - \mathbf{x}^T (aa^T) \mathbf{x} \\
&= \sum_i a_i x_i^2 - \sum_i a_i^2 x_i^2 \\
&= \sum_i (a_i - a_i^2) x_i^2 \\
&\ge 0, \qquad \text{since } a_i \le 1 .
\end{aligned}
\end{equation}
Thus, $J(a) \succeq 0$.

(2) Since $J(a) \succeq 0$, we have
$\|J(a)\|_2 = \lambda_{\max}(J(a))
\le \sum_i \lambda_i(J(a))
= \operatorname{tr}(J(a)).$

First, we calculate the trace of $J(a)$:
\begin{equation}
\text{tr}(J(a)) = \sum_i a_i - \sum_i a_i^2 = 1 - \|a\|_2^2
\end{equation}

Next, we use the inequality $\|a\|_2^2 \geq a_{\max}^2$, which gives:
\begin{equation}
1 - \|a\|_2^2 \leq 1 - a_{\max}^2 = (1 + a_{\max})(1 - a_{\max}) \leq 2(1 - a_{\max}).
\end{equation}

Thus, we have:
$\|J(a)\|_2 \leq \text{tr}(J(a)) \leq 2(1 - a_{\max}).$
\end{proof}

\begin{lemma}
\label{lem:8}
For models of the form~\ref{model arch}, under small perturbations, the following approximation holds:
\begin{equation}
H(x) \doteq (A_0 + J_1 M) W W^T ,
\end{equation}
where
\begin{equation}
A_0 = \frac{1}{n} \mathbf{1}\mathbf{1}^T,
\qquad
J_1 = \frac{1}{n}\left(I - \frac{1}{n}\mathbf{1}\mathbf{1}^T\right).
\end{equation}
\end{lemma}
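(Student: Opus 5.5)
The plan is a first-order Taylor expansion of the row-wise softmax around the zero score matrix. This is justified by the small-initialization regime: with $[W]_{i,j}\sim\mathcal N(0,\sigma^2)$ and $\sigma\ll 1$, the score matrix $M = XW_QW_K^\top X^\top/\sqrt d$ has entries of order $O(\sigma^2\|X\|^2)$. It is therefore a small perturbation of $0$, and "$\doteq$" will mean equality up to $O(\|M\|^2)$ terms.

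\textbf{Zeroth and first order terms.} For a single row $u = M_{i,:}\in\mathbb R^n$ I expand $\operatorname{softmax}(u) = \operatorname{softmax}(0) + J(a^{(0)})\,u + R(u)$.
\begin{itemize}
\item At $u=0$ we have $a^{(0)} = \tfrac1n\mathbf 1$. Stacking the rows gives the zeroth-order term $A_0=\tfrac1n\mathbf 1\mathbf 1^\top$.
\item By Lemma~\ref{lem:7}, the Jacobian at this point is $J(a^{(0)})=\operatorname{diag}(a^{(0)})-a^{(0)}a^{(0)\top} = \tfrac1n\bigl(I-\tfrac1n\mathbf 1\mathbf 1^\top\bigr)=J_1$.
\item $J_1$ is symmetric, so each row of the first-order term is $M_{i,:}J_1$. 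Assembled over all rows, this is the linear map $M\mapsto J_1 M$ in the statement's notation, read row-wise; concretely it is the matrix $MJ_1$.
\item Note that $J_1$ annihilates $\mathbf 1$. This reflects the shift invariance of softmax, so only the centered part of each score row contributes.
\end{itemize}

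\textbf{Remainder.} The softmax map is smooth with uniformly bounded second derivatives (entries of the Hessian are polynomials in $a\in[0,1]^n$). Hence $\|R(u)\|\le c_n\|u\|^2$, and summing over rows gives $A = A_0 + J_1 M + O(\|M\|_F^2)$.

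\textbf{Value path.} Next I substitute into $H = AV = AXW_V$. The factor $W W^\top$ in the statement should arise from the value path $XW_V$ once the input/embedding and value parameters are identified in the simplified model; this is the paper's convention, which I would state explicitly. Right-multiplying the expansion of $A$ by this fixed factor preserves the order of the remainder. The result is $H \doteq (A_0+J_1M)\,WW^\top$ with error $O(\|M\|^2\|WW^\top\|)$, which is $O(\sigma^6)$-small relative to the retained terms under small initialization.

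\textbf{Main obstacle.} The delicate part is not the Taylor step but making the statement precise. Two things need pinning down: the row-wise meaning of "$J_1M$", and the identification that produces $WW^\top$ from $XW_V$. I would fix both as notational conventions at the start. Then I would check that the neglected quadratic term is uniformly dominated along the early trajectory, using the bounded-gradient part of Assumption~\ref{ass:regularity} so that $\|M\|$ stays small.
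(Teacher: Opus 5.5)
Your proposal is correct and is essentially the paper's argument. The paper also linearizes the row-wise softmax at $M=0$, but it does so entrywise (using $\exp(M_{ij})\approx 1+M_{ij}$ and $1/(1+x)\approx 1-x$) rather than by reading off the Jacobian $\operatorname{diag}(a)-aa^\top$ at $a=\frac1n\mathbf 1$, and its resulting entry $\frac1n M_{ij}-\frac1{n^2}\sum_k M_{ik}$ confirms your reading of the first-order term as $MJ_1$. The paper's proof stops at $A\approx A_0+J_1M$ and never derives the $WW^\top$ factor, so your ``convention'' step is not justified there either; your explicit $O(\|M\|_F^2)$ remainder bound is rigor the paper omits.
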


\begin{proof}[proof of ~\ref{lem:8}]
Consider the softmax function
\begin{equation}
\begin{aligned}
A(x)
&= \operatorname{softmax}(M)
 = \left[ \frac{\exp(M_{ij})}{\sum_{k=1}^{n} \exp(M_{ik})} \right]_{ij} \\
&\approx \left[ \frac{1 + M_{ij}}{n + \sum_{k=1}^{n} M_{ik}} \right]_{ij} \\
&\approx \left[ \frac{1 + M_{ij}}{n\!\left(1 + \frac{1}{n}\sum_{k=1}^{n} M_{ik}\right)} \right]_{ij} \\
&\approx \frac{1}{n}
\left[
(1 + M_{ij})
\left(1 - \frac{1}{n}\sum_{k=1}^{n} M_{ik}\right)
\right]_{ij} \\
&\approx
\left[
\frac{1}{n}
\left(
1 + M_{ij}
- \frac{1}{n}\sum_{k=1}^{n} M_{ik}
\right)
+ O(\sigma^4)
\right]_{ij} \\
&\approx
\left[
\frac{1}{n}
+ \frac{1}{n} M_{ij}
- \frac{1}{n^2}\sum_{k=1}^{n} M_{ik}
\right]_{ij} \\
&\approx A_0 + J_1 M, 
\end{aligned}
\end{equation}
where
$A_0 = \frac{1}{n}\mathbf{1}\mathbf{1}^T,
\quad
J_1 = \frac{1}{n}\left(I - \frac{1}{n}\mathbf{1}\mathbf{1}^T\right).$
\end{proof}

\section{Proof of Theorem~\ref{thm:ssd_stability}}
\label{proof_for_thm1}
\begin{proof}[Proof of Theorem~\ref{thm:ssd_stability}.]

\noindent

\textbf{Part 1.} 
First, we analyze $\|G_t^V\|_F^2$.

For each sample $i$, we define the softmax probability assigned to
the ground-truth label $y_i$ as
\begin{equation}
\label{eq:softmax_prob}
P_{i,y_i}
=
\frac{\exp(Z_{i,y_i})}
{\sum_{k=1}^C \exp(Z_{i,k})}.
\end{equation}

Taking the logarithm of \eqref{eq:softmax_prob}, we obtain
\begin{equation}
\label{eq:log_softmax}
\log P_{i,y_i}
=
Z_{i,y_i}
-
\log \sum_{k=1}^C \exp(Z_{i,k}).
\end{equation}

Therefore, the empirical loss function can be written as
\begin{equation}
\label{eq:empirical_loss}
\mathcal{L}(\theta)
=
\frac{1}{n}
\sum_{i=1}^n
\left(
\log \sum_{k=1}^C \exp(Z_{i,k})
-
Z_{i,y_i}
\right).
\end{equation}

By direct computation, the gradient of the loss with respect to $z_i$ is given by
\begin{equation}
\label{eq:loss_gradient}
\frac{\partial \mathcal{L}}{\partial z_i}
=
\frac{1}{n}
\left( p_i - y_i \right).
\end{equation}

Stacking all samples together, the gradient of the loss with respect to
the logit matrix $Z$ is given by
\begin{equation}
\label{eq:GZ_stack}
\frac{\partial \mathcal{L}}{\partial Z}
=
\frac{1}{n}(P - Y).
\end{equation}

For convenience, we define
\begin{equation}
\label{eq:GZ_def}
G_Z
:=
\frac{\partial \mathcal{L}}{\partial Z}
=
\frac{1}{n}(P - Y).
\end{equation}

The gradient with respect to the hidden representation $H$ then satisfies
\begin{equation}
\label{eq:GH_def}
G_H
:=
\frac{\partial \mathcal{L}}{\partial H}
=
G_Z W_C^{\top}.
\end{equation}

Recall that $H = A V$. Applying the chain rule, we obtain the gradients
with respect to $V$ and $A$ as
\begin{equation}
\label{eq:GV_GA}
G_V
:=
\frac{\partial \mathcal{L}}{\partial V}
=
A^{\top} G_H,
\qquad
G_A
:=
\frac{\partial \mathcal{L}}{\partial A}
=
G_H V^{\top}.
\end{equation}

Finally, using $V = X W_V$, the gradient with respect to $W_V$ is given by
\begin{equation}
\label{eq:GWV}
G_{W_V}
:=
G^V
=
\frac{\partial \mathcal{L}}{\partial W_V}
=
X^{\top} G_V
=
X^{\top} A^{\top} G_H.
\end{equation}
Under the small initialization regime, as stated in Lemma~\ref{lem:8}, we have
\begin{equation}
\label{eq:A_approx}
\begin{aligned}
A 
&\approx \frac{1}{n}\mathbf{1}\mathbf{1}^{\top}
+ \frac{1}{n} M
- \frac{1}{n^2}\mathbf{1}\mathbf{1}^{\top} M \\
&\approx \frac{1}{n}\mathbf{1}\mathbf{1}^{\top}
+ \frac{1}{n}
\left(
I - \frac{1}{n}\mathbf{1}\mathbf{1}^{\top}
\right) M \\
&\approx A_0 + J_1 M .
\end{aligned}
\end{equation}
Here,
\begin{equation}
\label{eq:A0_J1_def}
A_0 := \frac{1}{n}\mathbf{1}\mathbf{1}^{\top},
\qquad
J_1 := \frac{1}{n}\left(I - \frac{1}{n}\mathbf{1}\mathbf{1}^{\top}\right).
\end{equation}

Furthermore, for $G^V$ we have
\begin{equation}
\label{eq:GV_norm_form}
\|G^V\|_F
= \| X^{\top} A^{\top} G_H \|_F
= \left[
\operatorname{tr}\!\left(
G_H^{\top} A X X^{\top} A^{\top} G_H
\right)
\right]^{1/2}.
\end{equation}

Assume $X \in \mathbb{R}^{n \times d}$ and that $X X^{\top} \in \mathbb{R}^{n \times n}$ is full-rank. Then, it holds that
\begin{equation}
\label{eq:GV_lower_bound}
\|G^V\|_F
\ge
\lambda_{\min}(X)
\left[
\operatorname{tr}\!\left(
G_H^{\top} A A^{\top} G_H
\right)
\right]^{1/2}.
\end{equation}

Taking the approximation $A \approx A_0$ simplifies the analysis, where
\begin{equation}
\label{eq:A0_def}
A_0 := \frac{1}{n}\mathbf{1}\mathbf{1}^{\top}.
\end{equation}

Moreover, we observe that
\begin{equation}
\label{eq:A0_property}
A_0 A_0^{\top}
= \frac{1}{n}\mathbf{1}\mathbf{1}^{\top}
\frac{1}{n}\mathbf{1}\mathbf{1}^{\top}
= \frac{1}{n}\mathbf{1}\mathbf{1}^{\top}
= A_0.
\end{equation}

Thus, we obtain the following lower bound for the Frobenius norm of $G^V$:
\begin{equation}
\label{eq:GV_lower_bound_approx}
\|G^V\|_F
\gtrsim
\lambda_{\min}(X)
\left[
\operatorname{tr}\!\left(
G_H^{\top} A_0 G_H
\right)
\right]^{1/2}
\gtrsim
\frac{1}{\sqrt{n}}\,\lambda_{\min}(X)\,\|G_H\|_F.
\end{equation}

Finally, we define the constant
\begin{equation}
\label{eq:CV_def}
C_V := \frac{1}{\sqrt{n}}\,\lambda_{\min}(X)\,\|G_H\|_F.
\end{equation}

For notational simplicity, we define
\begin{equation}
\label{eq:weight_norm_defs}
\|W_V\|_F = v .
\end{equation}

From the gradient relations derived above, we obtain the following differential inequality:
\begin{equation}
\label{eq:ode_system_1}
\dot{v} \gtrsim C_V,
\end{equation}

Solving the differential inequality in \eqref{eq:ode_system_1}, we obtain
\begin{equation}
\label{eq:ode_solution_1}
v(t)\ge
C_V t + v(0).
\end{equation}

By Lemma~\ref{lem:4}, The matrices $W_V$ can be decomposed via SVD. With the help of Assumption \ref{ass:increasing_norm}, We have
$$\begin{aligned}
\left\|
\frac{\Sigma_{t+1}}{\operatorname{tr}(\Sigma_{t+1})}
-
\frac{\Sigma_t}{\operatorname{tr}(\Sigma_t)}
\right\|_F
&\le
\frac{1+\sqrt{d}}{\min\{\operatorname{tr}(\Sigma_t),\, \operatorname{tr}(\Sigma_{t+1})\}}
\,\|W_{t+1}-W_t\|_F \\
&\le
\frac{1+\sqrt{d}}{\min\{\operatorname{tr}(\Sigma_t),\, \operatorname{tr}(\Sigma_{t+1})\}}
\,\eta \|G_t\|_F \\
&\le
\frac{1+\sqrt{d}}{\operatorname{tr}(\Sigma_{t})}
\,\eta G .
\end{aligned}$$

For stability bound $\varepsilon(W_V)=O(\frac{\eta}{\|W_V\|})$ , we have
\begin{equation}
\label{eq:sigma_eps_bound}
\left\|
\frac{\Sigma_{t+1}}{\operatorname{tr}(\Sigma_{t+1})}
-
\frac{\Sigma_t}{\operatorname{tr}(\Sigma_t)}
\right\|_F
\le
\frac{1+\sqrt{d}}{\operatorname{tr}(\Sigma_t)} \, \eta G
<
\varepsilon(W) .
\end{equation}

Hence, using the relation between the Frobenius norm and the nuclear norm,
it follows that
\begin{equation}
\label{eq:Wt_norm_lower}
\|W_t\|_F
=
\|\Sigma_t\|_F
\ge
\frac{\|\Sigma_t\|_*}{\sqrt{d}}
=
\frac{\|W_t\|_*}{\sqrt{d}}
>
\frac{(1+\sqrt{d})\,\eta G}{\sqrt{d}\,\epsilon} .
\end{equation}

As a result, for the query, key, and value projection matrice $W_V$, a sufficient condition ensuring
\eqref{eq:sigma_eps_bound} is that their Frobenius norms satisfy
\begin{equation}
\label{eq:weight_sufficient_condition_1}
v(t) > \dfrac{(1+\sqrt{d})\,\eta G}{\sqrt{d}\,\epsilon}.
\end{equation}

By solving the above inequality using the growth lower bounds of $v(t)$, we obtain the corresponding hitting times
\begin{equation}
\label{eq:Tv_def}
T_V
=
O\left(\frac{(1+\sqrt{d})\,\eta G - \varepsilon v(0)}{\varepsilon\, C_V\sqrt{d}}\right).
\end{equation}

\noindent\textbf{Part 2.} 
Next, we separately analyze
$\|G_t^Q\|_F^2$ and $\|G_t^K\|_F^2$.

Substituting \eqref{eq:A_approx} into the gradients yields
\begin{equation}
\label{eq:GWQ_GWK}
\begin{aligned}
G_{W_Q} :=G^{Q}
&= \frac{1}{\sqrt{d}} \, X^{\top} J_1^{\top} G_H V^{\top} K, \\
G_{W_K} := G^{K}
&= \frac{1}{\sqrt{d}} \, X^{\top} V G_H^{\top} J_1 Q .
\end{aligned}
\end{equation}

Then we consider the Frobenius norms of $G^{Q}$ and $G^{K}$.

We first analyze $\|G^{Q}\|_{F}$. By definition, we have
\begin{equation}
\label{eq:GQ_norm_lower_bound}
\begin{aligned}
\|G^{Q}\|_{F}
&= \left\| \frac{1}{\sqrt{d}} X J_1^{\top} G_H V^{\top} \right\|_{F} \\
&= \frac{1}{\sqrt{d}} \left\| X J_1^{\top} G_H V^{\top} \right\|_{F} \\
&= \frac{1}{\sqrt{d}}
\left[
\operatorname{tr}
\!\left(
K^{\top} V G_H^{\top} J_1
X X^{\top}
J_1^{\top} G_H V K
\right)
\right]^{1/2} \\
&\ge
\frac{1}{\sqrt{d}} \,
\lambda_{\min}(X)\,
\left[
\operatorname{tr}
\!\left(
K^{\top} V G_H^{\top}
J_1 J_1^{\top}
G_H V K
\right)
\right]^{1/2}.
\end{aligned}
\end{equation}

Recall that
\begin{equation}
\label{eq:J1_def}
J_1
=
\frac{1}{n}
\left(
I - \frac{1}{n}\mathbf{1}\mathbf{1}^{\top}
\right).
\end{equation}

Moreover, we have
\begin{equation}
\label{eq:J1_square}
\begin{aligned}
J_1 J_1^{\top}
&=
\frac{1}{n^2}
\left(
I - \frac{1}{n}\mathbf{1}\mathbf{1}^{\top}
\right)
\left(
I - \frac{1}{n}\mathbf{1}\mathbf{1}^{\top}
\right) \\
&=
\frac{1}{n^2}
\left(
I - \frac{1}{n}\mathbf{1}\mathbf{1}^{\top}
\right).
\end{aligned}
\end{equation}

Substituting this identity into the expression yields
\begin{equation}
\label{eq:GQ_split_terms}
\begin{aligned}
\|G^Q\|_F
&\ge
\frac{1}{n\sqrt{d}}\,
\lambda_{\min}(X)\,
\underbrace{
\left[
\operatorname{tr}
\!\left(
K^{\top} V G_H^{\top} G_H V^{\top} K
\right)
\right]^{1/2}
}_{\text{(I)}} \\
&\quad
-
\frac{1}{n^{3/2}\sqrt{d}}\,
\lambda_{\min}(X)\,
\underbrace{
\left[
\operatorname{tr}
\!\left(
K^{\top} V G_H^{\top}
\mathbf{1}\mathbf{1}^{\top}
G_H V^{\top} K
\right)
\right]^{1/2}
}_{\text{(II)}} .
\end{aligned}
\end{equation}

We decompose the right-hand side of \eqref{eq:GQ_split_terms} into two components,
denoted by \textnormal{(I)} and \textnormal{(II)}, respectively.

We first bound term \textnormal{(I)}.
\begin{equation}
\label{eq:term_I_bound}
\begin{aligned}
\textnormal{(I)}
&=
\left[
\operatorname{tr}
\!\left(
G_H V^{\top} K^{\top} K V G_H^{\top}
\right)
\right]^{1/2} \\
&\ge
\lambda_{\min}(X)\,
\sigma_{\min}(W_K)\,
\left[
\operatorname{tr}
\!\left(
G_H V^{\top} V G_H^{\top}
\right)
\right]^{1/2} \\
&\ge
\lambda_{\min}^2(X)\,
\sigma_{\min}(W_K)\,
\sigma_{\min}(W_V)\,
\|G_H\|_F .
\end{aligned}
\end{equation}

We next bound term \textnormal{(II)}.
\begin{equation}
\label{eq:term_II_bound}
\begin{aligned}
\textnormal{(II)}
&=
\left[
\operatorname{tr}
\!\left(
K V G_H^{\top}
\mathbf{1}\mathbf{1}^{\top}
G_H V^{\top} K
\right)
\right]^{1/2} \\
&=
\left[
\operatorname{tr}
\!\left(
\mathbf{1}^{\top}
G_H V^{\top} K K V G_H^{\top}
\mathbf{1}
\right)
\right]^{1/2} \\
&\ge
\lambda_{\min}^2(X)\,
\sigma_{\min}(W_K)\,
\sigma_{\min}(W_V)\,
\left[
\operatorname{tr}
\!\left(
\mathbf{1}^{\top}
G_H G_H^{\top}
\mathbf{1}
\right)
\right]^{1/2} \\
&\ge
\lambda_{\min}^2(X)\,
\sigma_{\min}(W_K)\,
\sigma_{\min}(W_V)\,
\|G_H\|_F .
\end{aligned}
\end{equation}

Combining the bounds on \textnormal{(I)} and \textnormal{(II)} and substituting them into
\eqref{eq:GQ_split_terms}, we obtain
\begin{equation}
\label{eq:GQ_final_bound_raw}
\|G^Q\|_F
\ge
\left(\frac{1}{n}-\frac{1}{n^{3/2}}\right)
\frac{1}{\sqrt{d}}\,
\lambda_{\min}^3(X)\,
\|G_H\|_F\,
\sigma_{\min}(W_K)\,
\sigma_{\min}(W_V).
\end{equation}

Under the numerical non-degeneracy assumption in Assumption~\ref{ass:regularity},
\begin{equation}
\label{eq:GQ_final_bound_kappa}
\begin{aligned}
\|G^Q\|_F
&\ge
\frac{\sqrt{n}-1}{n^{3/2}\sqrt{d}}\,
\lambda_{\min}^3\,
\|G_H\|_F\,
\sigma_{\max}(W_K)\,
\sigma_{\max}(W_V)\,
\frac{1}{\kappa_K \kappa_V} \\
&\ge
\frac{\sqrt{n}-1}{n^{3/2} d^{5/2}}\,
\lambda_{\min}^3\,
\|G_H\|_F\,
\frac{1}{\kappa_K \kappa_V}\,
\|W_K\|_F\,
\|W_V\|_F .
\end{aligned}
\end{equation}

We define the constant
\begin{equation}
\label{eq:CQ_def}
C_Q
:=
\frac{(\sqrt{n}-1)\,\lambda_{\min}^3\,\|G_H\|_F}
     {n^{3/2} d^{5/2} \kappa_K \kappa_V}.
\end{equation}

Therefore, we arrive at the compact form
\begin{equation}
\label{eq:GQ_final_compact}
\|G^Q\|_F
\ge
C_Q \,
\|W_K\|_F\,
\|W_V\|_F .
\end{equation}

By symmetry, an analogous bound holds for the key projection, namely,
\begin{equation}
\label{eq:GK_bound}
\| G^K \|_F
\ge
C_K \,
\| W_Q \|_F \,
\| W_V \|_F ,
\end{equation}
where the constant $C_K$ is given by
\begin{equation}
\label{eq:CK_def}
C_K
=
\frac{(\sqrt{n}-1)\, \lambda_{\min}^3 \, \| G_H \|_F}
     {n^{3/2} \, d^{5/2}\, \kappa_Q \kappa_V}.
\end{equation}

For notational simplicity, we define
\begin{equation}
\label{eq:weight_norm_defs}
\|W_Q\|_F = q,
\qquad
\|W_K\|_F = k.
\end{equation}

From the gradient relations derived above, we obtain the following system
of differential inequalities:
\begin{equation}
\label{eq:ode_system}
\begin{cases}
\dot{q} \;\ge\; C_Q \, k v,\\
\dot{k} \;\ge\; C_K \, q v .
\end{cases}
\end{equation}

Due to the symmetry between $q$ and $k$, without loss of generality
we assume
\begin{equation}
\label{eq:symmetry_assumption}
k(0) = q(0),
\qquad
C_Q = C_K =C_M.
\end{equation}

Solving the system of differential inequalities in \eqref{eq:ode_system}
under the assumption \eqref{eq:symmetry_assumption} and substituting
\eqref{eq:ode_solution_1} into the resulting expressions, we obtain
\begin{equation}
\label{eq:ode_solution}
\begin{aligned}
q(t)
&\ge
q(0)\,
\exp\!\left(
\tfrac{1}{2} C_M C_V t^2 + C_M v(0)\, t
\right), \\
k(t)
&\ge
k(0)\,
\exp\!\left(
\tfrac{1}{2} C_M C_V t^2 + C_M v(0)\, t
\right).
\end{aligned}
\end{equation}

Hence, using the inequality~\eqref{eq:sigma_eps_bound} and the relation between the Frobenius norm and the nuclear norm, for stability bound $\varepsilon=\varepsilon(W)=O(\frac{\eta}{\|W\|}), \quad W \in \{W_Q, W_K\}$,
it follows that
\begin{equation}
\label{eq:Wt_norm_lower}
\|W_t\|_F
=
\|\Sigma_t\|_F
\ge
\frac{\|\Sigma_t\|_*}{\sqrt{d}}
=
\frac{\|W_t\|_*}{\sqrt{d}}
>
\frac{(1+\sqrt{d})\,\eta G}{\sqrt{d}\,\varepsilon} .
\end{equation}

As a result, for the query, key, and value projection matrices
$W_Q$, $W_K$, and $W_V$, a sufficient condition ensuring
\eqref{eq:sigma_eps_bound} is that their Frobenius norms satisfy
\begin{equation}
\label{eq:weight_sufficient_condition}
\begin{cases}
q(t) > \dfrac{(1+\sqrt{d})\,\eta G}{\sqrt{d}\,\varepsilon}, \\
v(t) > \dfrac{(1+\sqrt{d})\,\eta G}{\sqrt{d}\,\varepsilon}.
\end{cases}
\end{equation}

By solving the above inequalities using the growth lower bounds of
$q(t)$ and $v(t)$ and setting $C_M v(0) = C_0$, we obtain the corresponding hitting times
\begin{equation}
\label{eq:Tq_def}
T_{QK}
=
\frac{1}{C_M C_V}
\left(
- C_M v(0)
+
\sqrt{
\bigl(C_M v(0)\bigr)^2
+
2 C_M v(0) \Lambda(\varepsilon)
}
\right) = O\Bigg( \bigg( C_0^2 + 2 C_0 \Lambda(\varepsilon) \eta G \bigg)^{1/2} \Bigg),
\end{equation}
 where $\Lambda(\varepsilon)=\ln(1+\sqrt{d})-\ln(\varepsilon\, q(0)\sqrt{d})$.

Therefore, for stability bound $\varepsilon(W)=O(\frac{1}{\|W\|}), W \in \{W_Q,W_K,W_V\}$ , there exists a constant $C > 0$
such that for
$t > T^* = C\max \left\{ 
        \frac{(1+\sqrt{d})\,\eta G}{\varepsilon\, C_V\sqrt{d}}, 
        \sqrt{C_0^2 + 2 C_0 \Lambda(\varepsilon) \eta G} 
    \right\}$,
the normalized singular value matrices satisfy
\begin{equation}
\label{eq:sigma_convergence_final}
\left\|
\frac{\Sigma_{t+1}}{\operatorname{tr}(\Sigma_{t+1})}
-
\frac{\Sigma_t}{\operatorname{tr}(\Sigma_t)}
\right\|_F
< \varepsilon.
\end{equation}

\end{proof}

\section{Proof of Theorem~\ref{thm:two_phase}}
\label{proof_thm2}
\begin{proof}[Proof of Theorem~\ref{thm:two_phase}.]

By Lemma~\ref{lem:4}, for the parameter sequence
$\{W_{\bullet}(t)\}_{t=0}^T$, where $\bullet \in \{Q,K,V\}$,
and their corresponding singular value matrices $\Sigma_t$,
the following bound holds:
\begin{equation}
\label{eq:lemma4_bound_proof}
\left\|
\frac{\Sigma_{t+1}}{\operatorname{tr}(\Sigma_{t+1})}
-
\frac{\Sigma_t}{\operatorname{tr}(\Sigma_t)}
\right\|_F
\le
\frac{1+\sqrt{d}}
{\min\{\operatorname{tr}(\Sigma_t),\,\operatorname{tr}(\Sigma_{t+1})\}}
\,\eta\,\|G_t\|_F .
\end{equation}

For notational convenience, define
\begin{equation}
\label{eq:tau_def}
\tau_t := \operatorname{tr}(\Sigma_t),
\end{equation}
and introduce the normalized singular distribution difference
\begin{equation}
\label{eq:delta_def}
\delta(t)
:=
\left\|
\frac{\Sigma_{t+1}}{\operatorname{tr}(\Sigma_{t+1})}
-
\frac{\Sigma_t}{\operatorname{tr}(\Sigma_t)}
\right\|_F .
\end{equation}

Applying \eqref{eq:lemma4_bound_proof} to the query, key, and value
projection matrices $W_Q$, $W_K$, and $W_V$, respectively,
we obtain
\begin{equation}
\label{eq:delta_Q_bound}
\delta_Q(t)
\le
\frac{(1+\sqrt{d})\,\eta}{\tau_t^{Q}}
\,\|G_t^{Q}\|_F ,
\end{equation}
\begin{equation}
\label{eq:delta_K_bound}
\delta_K(t)
\le
\frac{(1+\sqrt{d})\,\eta}{\tau_t^{K}}
\,\|G_t^{K}\|_F ,
\end{equation}
and
\begin{equation}
\label{eq:delta_V_bound}
\delta_V(t)
\le
\frac{(1+\sqrt{d})\,\eta}{\tau_t^{V}}
\,\|G_t^{V}\|_F .
\end{equation}

Let
\begin{equation}
\label{eq:C0_def}
C_0 := (1+\sqrt{d})\,\eta .
\end{equation}
Then, by rearranging the bounds in
\eqref{eq:delta_Q_bound}–\eqref{eq:delta_V_bound},
we obtain
\begin{equation}
\label{eq:GQ_lower}
\|G_t^{Q}\|_F
\ge
\frac{1}{C_0}\,
\tau_t^{Q}\,
\delta_Q(t),
\qquad
\|G_t^{K}\|_F
\ge
\frac{1}{C_0}\,
\tau_t^{K}\,
\delta_K(t),
\end{equation}
and
\begin{equation}
\label{eq:GV_lower}
\|G_t^{V}\|_F
\ge
\frac{1}{C_0}\,
\tau_t^{V}\,
\delta_V(t).
\end{equation}

Let
\begin{equation}
\label{eq:Theta_def}
\theta := \{W_Q,\, W_K,\, W_V\}
\end{equation}
denote the collection of model parameters.
Then the squared Frobenius norm of the full gradient satisfies
\begin{equation}
\label{eq:grad_theta_norm}
\|\nabla_{\theta} \mathcal{L}(\theta)\|_F^2
=
\|G_t^{Q}\|_F^2
+
\|G_t^{K}\|_F^2
+
\|G_t^{V}\|_F^2 .
\end{equation}

By Lemma~\ref{lem:5}, the one-step decrease of the loss obeys
\begin{equation}
\label{eq:loss_decrease_lemma}
\Delta \mathcal{L}(t)
\triangleq
\mathcal{L}(t) - \mathcal{L}(t+1)
\ge
\eta\!\left(1 - \frac{\eta\beta}{2}\right)
\|\nabla_{\theta} \mathcal{L}(\theta)\|_F^2 .
\end{equation}

Substituting \eqref{eq:grad_theta_norm} into
\eqref{eq:loss_decrease_lemma} yields
\begin{equation}
\label{eq:loss_decrease_expand}
\Delta \mathcal{L}(t)
\ge
\eta\!\left(1 - \frac{\eta\beta}{2}\right)
\Bigl(
\|G_t^{Q}\|_F^2
+
\|G_t^{K}\|_F^2
+
\|G_t^{V}\|_F^2
\Bigr).
\end{equation}

Finally, applying the lower bounds
\eqref{eq:GQ_lower}–\eqref{eq:GV_lower}, we obtain
\begin{equation}
\label{eq:loss_decrease_delta}
\Delta \mathcal{L}(t)
\ge
\frac{\eta}{C_0}
\left(1 - \frac{\eta\beta}{2}\right)
\Bigl(
\bigl(\tau_t^{Q}\delta_Q(t)\bigr)^2
+
\bigl(\tau_t^{K}\delta_K(t)\bigr)^2
+
\bigl(\tau_t^{V}\delta_V(t)\bigr)^2
\Bigr).
\end{equation}

In the early stage of training, there exists a constant $D > 0$ such that
\begin{equation}
\label{eq:early_stage_lower}
\max\{\tau^{\bullet}_t\delta_{\bullet}(t),\bullet \in \{Q,K,V\}\}\ge D.
\end{equation}

Combining \eqref{eq:early_stage_lower} and \eqref{eq:C0_def} with
\eqref{eq:loss_decrease_delta}, we obtain
\begin{equation}
\label{eq:loss_decrease_early_correct}
\Delta \mathcal{L}(t)
\ge
\frac{3D^2}{1+\sqrt{d}}
\left(1-\frac{\eta\beta}{2}\right).
\end{equation}

Choosing the step size $\eta = \frac{1}{\beta}$, it follows that
\begin{equation}
\label{eq:loss_decrease_constant_correct}
\Delta \mathcal{L}(t)
\ge
\frac{3D^2}{2(1+\sqrt{d})}.
\end{equation}

Consequently,
\begin{equation}
\label{eq:loss_O1_correct}
\Delta \mathcal{L}(t) = O(1),
\end{equation}
that is, the loss decreases by a constant amount in the early stage
of training.

By Lemma~\ref{lem:5}, the one-step decrease of the loss satisfies
\begin{equation}
\label{eq:loss_upper_bound}
\Delta \mathcal{L}(t)
\triangleq
\mathcal{L}(t) - \mathcal{L}(t+1)
\le
\eta\!\left(1 + \frac{\eta\beta}{2}\right)
\|\nabla_{\theta} \mathcal{L}(\theta)\|_F^2 .
\end{equation}

By Theorem~\ref{thm:ssd_stability}, the gradient with respect to the value projection
matrix admits the representation
\begin{equation}
\label{eq:GV_representation}
G^V
=
X^{\top} A^{\top} G_H .
\end{equation}

Recall that $A = \operatorname{softmax}(M)$.
Let $a_i \in \mathbb{R}^n$ denote the $i$-th row of $A$, i.e.,
\begin{equation}
\label{eq:ai_softmax}
a_i
=
\operatorname{softmax}(M_i).
\end{equation}
The Jacobian of $a_i$ with respect to $M_i$ is given by
\begin{equation}
\label{eq:softmax_jacobian}
J(a_i)
=
\frac{\partial a_i}{\partial M_i}
=
\operatorname{diag}(a_i) - a_i a_i^{\top}.
\end{equation}

Therefore, the gradient with respect to the attention logits $M_i$
can be written as
\begin{equation}
\label{eq:GM_i_expression}
G_{M,i}
=
\frac{\partial L}{\partial M_i}
=
J(a_i)^{\top} G_{A,i}.
\end{equation}

Stacking all rows together, the gradient with respect to the attention
logits $M$ can be written as
\begin{equation}
\label{eq:GM_stack}
G_M
=
\frac{\partial \mathcal{L}}{\partial M}
=
\bigl[ J(a_i)^{\top} G_{A,i} \bigr]_{i=1}^n .
\end{equation}

Recall that
\begin{equation}
\label{eq:M_QK_def}
M = \frac{QK^{\top}}{\sqrt{d}},
\qquad
Q = X W_Q,
\qquad
K = X W_K .
\end{equation}

Hence, the gradients with respect to the projection matrices
$W_Q$ and $W_K$ are given by
\begin{equation}
\label{eq:GQ_expression}
G^Q
=
\frac{\partial \mathcal{L}}{\partial W_Q}
=
\frac{1}{\sqrt{d}}\, X^{\top} G_M K ,
\end{equation}
and
\begin{equation}
\label{eq:GK_expression}
G^K
=
\frac{\partial \mathcal{L}}{\partial W_K}
=
\frac{1}{\sqrt{d}}\, X^{\top} G_M^{\top} Q .
\end{equation}

Next, consider the gradient with respect to the logits $Z$.
We have
\begin{equation}
\label{eq:GZ_norm}
\|G_Z\|_F
=
\sqrt{
\sum_{i=1}^n
\frac{1}{n^2}
\,
\|p_i - y_i\|_2^2
}
\le
\sqrt{
\sum_{i=1}^n
\frac{1}{n^2}}
\, 2(C-1)\exp\!\bigl(-\tau^V \omega_{\min}\bigr)
.
\end{equation}
Therefore,
\begin{equation}
\label{eq:GZ_bound}
\|G_Z\|_F
\le
\frac{\sqrt{2(C-1)}}{\sqrt{n}}
\exp\!\bigl(-\tau^V \omega_{\min}\bigr).
\end{equation}

Let
\begin{equation}
\label{eq:CH_def}
C_H
=
\frac{2(C-1)}{\sqrt{n}} \, \|W_C\|_F .
\end{equation}
Then it follows that
\begin{equation}
\label{eq:GH_bound}
\|G_H\|_F
\le
C_H \exp\!\bigl(-\tau^V \omega_{\min}\bigr).
\end{equation}

Moreover, we have
\begin{equation}
\label{eq:GV_norm_bound}
\|G^V\|_F
=
\|X^{\top} A^{\top} G_H\|_F
\le
\|X^{\top}\|_2 \, \|A^{\top}\|_2 \, \|G_H\|_F
\le
\sqrt{n}\,\lambda_{\max}(X)\,\|G_H\|_F .
\end{equation}

Define
\begin{equation}
\label{eq:CV_def}
C_V
=
2(C-1)\lambda_{\max}(X)\,\|W_C\|_F .
\end{equation}
Then it follows that
\begin{equation}
\label{eq:GV_exp_decay}
\|G^V\|_F
\le
C_V \exp\!\bigl(-\tau^V \omega_{\min}\bigr).
\end{equation}

Next, we compute
\begin{equation}
\label{eq:GQ_norm_bound}
\|G^Q\|_F
=
\frac{1}{\sqrt{d}}\,
\|X^{\top} G_M K\|_F
\le
\frac{1}{\sqrt{d}}\,
\lambda_{\max}^2(X)\,
\tau^K
\|G_M\|_F .
\end{equation}

Moreover, the Frobenius norm of $G_M$ satisfies
\begin{equation}
\label{eq:GM_norm_chain}
\begin{aligned}
\|G_M\|_F
&=
\sqrt{
\sum_{i=1}^n
\bigl\|
J(a_i)^{\top} G_{A,i}
\bigr\|_F^2
} \\
&\le
\sqrt{
\sum_{i=1}^n
\|J(a_i)\|_2^2 \, \|G_{A,i}\|_F^2
} \\
&\le
\max_{i} \|J(a_i)\|_2 \,
\sqrt{
\sum_{i=1}^n
\|G_{A,i}\|_F^2
} \\
&=
\max_{i} \|J(a_i)\|_2 \, \|G_A\|_F .
\end{aligned}
\end{equation}

Substituting the bound
$\|J(a_i)\|_2
\le
2(n-1)\exp\!\left(
-\frac{\gamma_{\min}}{\sqrt{d}}\,\tau^Q \tau^K
\right)$
into \eqref{eq:GM_norm_chain}, we obtain
\begin{equation}
\label{eq:GM_exp_bound}
\|G_M\|_F
\le
2(n-1)\exp\!\left(
-\frac{\gamma_{\min}}{\sqrt{d}}\,\tau^Q \tau^K
\right)
\|G_A\|_F .
\end{equation}

Moreover, the gradient with respect to the attention matrix satisfies
\begin{equation}
\label{eq:GA_norm_bound}
\|G_A\|_F
=
\|G_H V\|_F
\le
\|G_H\|_F \, \|V\|_F
\le
C_H \lambda_{\max}(X)\,\tau^V
\exp\!\bigl(-\tau^V \omega_{\min}\bigr) .
\end{equation}

Next, recall that
\begin{equation}
\label{eq:M_factorization}
M
=
\frac{QK^{\top}}{\sqrt{d}}
=
\frac{X W_Q W_K^{\top} X^{\top}}{\sqrt{d}}
=
\frac{\tau^Q \tau^K}{\sqrt{d}}
\, X \bar W_Q \bar W_K^{\top} X^{\top}
=
\frac{\tau^Q \tau^K}{\sqrt{d}} \, \bar M .
\end{equation}

For the $i$-th row of $\bar M$, define the margin
\begin{equation}
\label{eq:gamma_i_def}
\gamma_i
:=
\operatorname{gap}\!\bigl(\bar M_{i,:}\bigr),
\qquad
\gamma_{\min}
:=
\min_{i} \gamma_i .
\end{equation}

Consequently, for the original logits $M$, we have
\begin{equation}
\label{eq:gap_scaling}
\operatorname{gap}\!\bigl(M_{i,:}\bigr)
=
\frac{\tau^Q \tau^K}{\sqrt{d}} \, \gamma_i
\ge
\frac{\tau^Q \tau^K}{\sqrt{d}} \, \gamma_{\min} .
\end{equation}

From Lemma~\ref{lem:7}, we obtain the bound for the Jacobian matrix
\begin{equation}
\label{eq:J_bound_1}
\| J(a_i) \|_2 \le 2(1 - a_{\max}).
\end{equation}

From Lemma~\ref{lem:6}, we get the following bound for the Jacobian matrix:
\begin{equation}
\label{eq:J_bound_2}
\| J(a_i) \|_2
\le 2(1 - \alpha_{\max})
\le 2(n-1)\exp\!\left(-\operatorname{gap}(M_{i,:})\right).
\end{equation}

Thus, we can derive the following exponential decay bound:
\begin{equation}
\label{eq:J_exp_decay}
\| J(a_i) \|_2
\le 2(n-1)\exp\!\left(
- \frac{\gamma_{\min}}{\sqrt{d}} \, \tau^Q \tau^K
\right).
\end{equation}

Similarly, we can handle $\tau^V$ using the same approach.

Let
\begin{equation}
\label{eq:WV_def}
W_V = \tau^V \bar W_V.
\end{equation}
Then, the matrix $Z$ can be expressed as
\begin{equation}
\label{eq:Z_definition}
Z = A X W_V W_C
  = \tau^V A X \bar W_V W_C,
\end{equation}
and let
\begin{equation}
\label{eq:Z_bar_definition}
\bar Z = A X \bar W_V W_C.
\end{equation}

Analogously, we define the \emph{logit gap} on $\bar Z$ as
\begin{equation}
\label{eq:omega_i_def}
\omega_i
:= \bar Z_{i,y_i} - \max_{c \neq y_i} \bar Z_{i,c}.
\end{equation}

We define the minimum logit gap as
\begin{equation}
\label{eq:omega_min_def}
\omega_{\min} := \min_i \omega_i.
\end{equation}

From Lemma~\ref{lem:6}, we obtain the following bound on the probabilities:
\begin{equation}
\label{eq:prob_bound}
1 - p_{i,y_i}
\le (C-1)\exp\left(-\tau^V \omega_i\right)
\le (C-1)\exp\left(-\tau^V \omega_{\min}\right).
\end{equation}

Consider the Frobenius norm of the gradient with respect to $Z$:
\begin{equation}
\label{eq:GZ_norm_def}
\| G_Z \|_F^2
=
\left\| \frac{\partial \ell}{\partial Z} \right\|_F^2
=
\sum_{i=1}^n
\left\| \frac{\partial \ell}{\partial Z_i} \right\|_F^2
=
\sum_{i=1}^n
\frac{1}{n^2}
\,
\| p_i - y_i \|_2^2 .
\end{equation}

For each sample $i$, we have
\begin{equation}
\label{eq:pi_yi_bound}
\begin{aligned}
\| p_i - y_i \|_2^2
&=
(1 - p_{i,y_i})^2
+
\sum_{c \neq y_i} p_{i,c}^2 \\
&\le
(1 - p_{i,y_i})^2
+
\left( \sum_{c \neq y_i} p_{i,c} \right)^2 \\
&=
(1 - p_{i,y_i})^2
+
(1 - p_{i,y_i})^2 \\
&=
2(1 - p_{i,y_i})^2 .
\end{aligned}
\end{equation}
Therefore,
\begin{equation}
\label{eq:pi_yi_sqrt_bound}
\| p_i - y_i \|_2
\le
\sqrt{2}\,(1 - p_{i,y_i})
\le
2(C-1)\exp\!\bigl(-\tau^V \omega_{\min}\bigr).
\end{equation}

Next, combining the previously established bounds, we obtain
\begin{equation}
\label{eq:GQ_full_bound}
\begin{aligned}
\| G^Q \|_F
&\le
\frac{1}{\sqrt{d}}\,
\lambda_{\max}^2(X)\,
\tau^K
\cdot
2(n-1)
\exp\!\left(
- \frac{\gamma_{\min}}{\sqrt{d}} \, \tau^Q \tau^K
\right) \\
&\qquad\qquad\cdot
C_H \lambda_{\max}(X)\,
\tau^V
\exp\!\bigl(-\tau^V \omega_{\min}\bigr) \\
&\le
\frac{2(n-1)}{\sqrt{d}}\,
C_H \lambda_{\max}^3(X)\,
\exp\!\bigl(-\omega_{\min}\tau^V\bigr)
\exp\!\left(
- \frac{\gamma_{\min}}{\sqrt{d}} \, \tau^Q \tau^K
\right)
\cdot
\tau^K \tau^V .
\end{aligned}
\end{equation}

Similarly, we have
\begin{equation}
\label{eq:GK_full_bound}
\| G^K \|_F
\le
\frac{2(n-1)}{\sqrt{d}}\,
C_H \lambda_{\max}^3(X)\,
\exp\!\bigl(-\omega_{\min}\tau^V\bigr)
\exp\!\left(
- \frac{\gamma_{\min}}{\sqrt{d}} \, \tau^Q \tau^K
\right)
\cdot
\tau^Q \tau^V .
\end{equation}

Then the following upper bounds hold:
\begin{equation}
\label{eq:G_bounds_raw}
\begin{cases}
\| G^V \|_F 
\le
C_V \exp\!\bigl(-\omega_{\min}\tau^V\bigr), \\
\| G^K \|_F 
\le
\dfrac{2(n-1)}{\sqrt{d}}\,
C_H \lambda_{\max}^3(X)\,
\tau^Q \tau^V
\exp\!\bigl(-\omega_{\min}\tau^V\bigr)
\exp\!\left(
- \dfrac{\gamma_{\min}}{\sqrt{d}} \tau^Q \tau^K
\right), \\
\| G^Q \|_F 
\le
\dfrac{2(n-1)}{\sqrt{d}}\,
C_H \lambda_{\max}^3(X)\,
\tau^K \tau^V
\exp\!\bigl(-\omega_{\min}\tau^V\bigr)
\exp\!\left(
- \dfrac{\gamma_{\min}}{\sqrt{d}} \tau^Q \tau^K
\right).
\end{cases}
\end{equation}

The constants are defined as
\begin{equation}
\label{eq:constants_def}
C_V
=
2(C-1)\lambda_{\max}(X)\,\|W_C\|_F,
\qquad
C_H
=
\dfrac{2(C-1)}{\sqrt{n}}\,\|W_C\|_F .
\end{equation}

Let
\begin{equation}
\label{eq:CQK_def}
C_{QK}
:=
\dfrac{2(n-1)}{\sqrt{d}}\,
C_H \lambda_{\max}^3(X),
\qquad
\omega := \omega_{\min},
\qquad
\gamma := \dfrac{\gamma_{\min}}{\sqrt{d}} .
\end{equation}

Then \eqref{eq:G_bounds_raw} can be equivalently written as
\begin{equation}
\label{eq:G_bounds_simplified}
\begin{cases}
\| G^V \|_F
\le
C_V \exp(-\omega \tau^V), \\
\| G^K \|_F
\le
C_{QK}\, \tau^Q \tau^V
\exp(-\omega \tau^V)
\exp(-\gamma \tau^Q \tau^K), \\
\| G^Q \|_F
\le
C_{QK}\, \tau^K \tau^V
\exp(-\omega \tau^V)
\exp(-\gamma \tau^Q \tau^K).
\end{cases}
\end{equation}

Therefore, the squared Frobenius norm of the full gradient satisfies
\begin{equation}
\label{eq:full_grad_bound}
\begin{aligned}
\| \nabla_\theta \mathcal{L}(\theta) \|_F^2
&\le
\| G^V \|_F^2 + \| G^K \|_F^2 + \| G^Q \|_F^2 \\
&\le
C_V^2 \exp(-2\omega \tau^V)
+
C_{QK}^2
\Big[
(\tau^Q \tau^V)^2 + (\tau^K \tau^V)^2
\Big]
\exp(-2\omega \tau^V)
\exp(-2\gamma \tau^Q \tau^K).
\end{aligned}
\end{equation}

Since exponential decay dominates polynomial decay, there exists $p>2$ such that
\begin{equation}
\label{eq:exp_to_poly}
\exp(-2\omega \tau^V)
\le
(2\omega \tau^V)^{-p}
\le
(2\omega)^{-p} (\tau^V)^{-p},
\end{equation}
and
\begin{equation}
\label{eq:exp_to_poly_QK}
\exp(-2\gamma \tau^Q \tau^K)
\le
(2\gamma \tau^Q \tau^K)^{-p}
=
(2\gamma)^{-p} (\tau^Q \tau^K)^{-p}.
\end{equation}

From Theorem~\ref{thm:ssd_stability}, we have $\varepsilon = O(\eta/\tau)$.
Consequently, the one-step decrease of the loss satisfies
\begin{equation}
\label{eq:loss_decrease}
\begin{aligned}
\Delta \mathcal{L}(t)
&=
\mathcal{L}(t) - \mathcal{L}(t+1) \\
&\le
\eta \left(1 + \frac{2\beta}{2}\right)
\Big[
C_V^2 \exp(-2\omega \tau^V)
+
C_{QK}^2
\big( (\tau^Q)^2 + (\tau^K)^2 \big)
(\tau^V)^2
\exp(-2\omega \tau^V)
\exp(-2\gamma \tau^Q \tau^K)
\Big] \\
&\le
\eta \left(1 + \frac{2\beta}{2}\right)
\Bigg[
\dfrac{C_V^2}{(2\omega)^p}
\left(\dfrac{1}{\tau^V}\right)^p
+
\dfrac{2C_{QK}^2}{(4\omega\gamma)^p}
\left(\dfrac{1}{\tau^V}\right)^{p-2}
\left(\dfrac{1}{\tau^K}\right)^{2p-2}
\Bigg] \\
&\le
O(\varepsilon^\alpha),
\end{aligned}
\end{equation}
where
\begin{equation}
\label{eq:alpha_def}
\alpha = \min\{\, p,\; 3p - 4 \,\} = p .
\end{equation}

\end{proof}

\section{Experimental Setting}

\label{app:exp_details}

\subsection{GPT-2 on FineWeb}
\label{app:gpt2_variants_details}

\paragraph{Models and Architecture.}
We train GPT-2 Small (124M) and Medium (355M) models. While rooted in the standard decoder-only Transformer architecture \citep{radford2019language, vaswani2017attention}, we incorporate modern architectural enhancements to improve training stability and performance. Specifically, we follow the \texttt{modded-nanogpt} benchmark\footnote{\url{https://github.com/KellerJordan/modded-nanogpt}}, which replaces LayerNorm with \textbf{RMSNorm} \citep{zhang2019root}, adopts \textbf{Rotary Positional Embeddings (RoPE)} \citep{su2023roformerenhancedtransformerrotary}, and utilizes \textbf{Squared ReLU} activations.

The architectural configurations for the two scales are as follows:
\begin{itemize}[leftmargin=*]
    \item \textbf{Small (124M):} A modified configuration with $n_{\text{layer}}=12$, $d_{\text{model}}=768$, and $n_{\text{head}}=6$. Notably, this results in a head dimension of $d_{\text{head}}=128$, larger than the standard 64.
    \item \textbf{Medium (355M):} A standard configuration with $n_{\text{layer}}=24$, $d_{\text{model}}=1024$, and $n_{\text{head}}=16$, keeping the standard head dimension $d_{\text{head}}=64$.
\end{itemize}

\paragraph{Dataset and Tokenization.}
Training is performed on the FineWeb dataset \citep{penedo2024fineweb}, strictly adhering to the 10B token subset prescribed by the NanoGPT benchmark. This subset provides a high-quality, representative sample suitable for efficiency comparisons. We utilize the canonical GPT-2 Byte-Pair Encoding (BPE) tokenizer, processing input sequences with a context window of $L=1024$ tokens. To ensure consistent benchmarking, all evaluation metrics are reported on the standard 10M-token FineWeb validation partition.

\paragraph{Hyperparameters.}
To optimize computational efficiency, we follow the hyperparameter setting in the NanoGPT speedrun benchmark, which adopts specific settings for learning rates, weight decay, and scheduler phases for each model. We employ a split optimization strategy, applying distinct learning rates to the embeddings/head ($LR_{\text{head}}$) versus the transformer body ($LR_{\text{body}}$). The detailed configurations are summarized in Table~\ref{tab:gpt2_hyperparams}.

\begin{table}[h]
\centering
\caption{Hyperparameters for Modded GPT-2 experiments. \textbf{WSD} denotes the Warmup-Stable-Decay schedule (linear warmup, constant hold, linear decay). \textbf{Step Decay} drops the learning rate by a factor of 10 at specified milestones.}
\label{tab:gpt2_hyperparams}
\resizebox{\textwidth}{!}{%
\begin{tabular}{lccccccccc} 
\toprule
\textbf{Model} & \textbf{Steps} & \textbf{Tokens} & \textbf{Optimizer} & $LR_{\text{head}} / LR_{\text{body}}$ & \textbf{Body WD} & \textbf{LR Schedule} & \textbf{Warmup} \\
\midrule
Small & 20,400 & $\sim$10.7B & AdamW & $3.6\text{e-}3$ / $1.8\text{e-}3$ & 0.0 & {Step Decay (at 50\%, 75\%)} & 250 \\
Small & 5,100  & $\sim$2.6B  & AdamW & $3.6\text{e-}3$ / $1.8\text{e-}3$ & 0.0 & {WSD (Linear Decay)} & 250 \\
Small & 5,100  & $\sim$2.6B  & Muon  & $3.6\text{e-}3$ / $2.0\text{e-}3$ & 0.0 & {WSD (Linear Decay)} & 250 \\
Medium& 5,100  & $\sim$2.6B  & AdamW & $3.0\text{e-}3$ / $1.5\text{e-}3$ & 0.125 & {WSD (Linear Decay)} & 500 \\
\bottomrule
\end{tabular}%
}
\end{table}

\paragraph{Optimization Details.}
\begin{itemize}[leftmargin=*, topsep=2pt, itemsep=1pt]
    \item \textbf{Small Long-Run:} Trained for $\sim$10.7B tokens using a \textbf{Step Decay} schedule. To maximize convergence on the larger data budget, the learning rate is dropped by a factor of 10 at 50\% and 75\% of the total training steps.
    \item \textbf{Small Short-Runs:} We conduct a controlled comparison between \textbf{AdamW} and the \textbf{Muon} optimizer over a $\sim$2.6B token budget. Both runs utilize a \textbf{WSD} (Warmup-Stable-Decay) schedule with a linear warmdown. Weight decay is explicitly disabled ($\lambda=0$) for both optimizers to strictly isolate the algorithmic differences.
    \item \textbf{Medium Experiment:} Trained for $\sim$2.6B tokens using AdamW with a \textbf{WSD} schedule. Unlike the Small experiments, we apply specific regularization to the larger model by introducing weight decay ($\lambda=0.125$) to the Transformer Body parameters, while keeping $\lambda=0$ for the embeddings and head.
\end{itemize}

\subsection{Pre-training Setup: LLaMA on C4}
\label{app:llama_details}

\paragraph{Model Architecture.}
We implement the LLaMA architecture \citep{touvron2023llama}, which features RMSNorm for pre-normalization, SwiGLU activations, and Rotary Positional Embeddings (RoPE) \citep{su2023roformerenhancedtransformerrotary}. The implementation is based on the HuggingFace Transformers library \citep{wolf2020transformers}. In this study, we experiment with two distinct model scales: 0.5B and 2B parameters. The detailed hyperparameters and architectural specifications are summarized in Table~\ref{tab:llama_arch}.

\begin{table}[h]
\centering
\caption{Detailed architectural configurations and hyperparameters for the LLaMA models. $LR_{\max}$ indicates the maximum learning rate applied during the cosine schedule.}
\label{tab:llama_arch}
\begin{tabular}{lcccccc}
\toprule
\textbf{Model} & \textbf{Params} & $n_{\text{layer}}$ & $d_{\text{model}}$ & $n_{\text{head}}$ & $d_{\text{ff}}$ & $LR_{\max}$ \\
\midrule
LLaMA-0.5B & 0.5B & 15 & 1280 & 20 & 5120 & $6.0 \times 10^{-4}$ \\
LLaMA-2B   & 2B   & 30 & 2048 & 32 & 8192 & $3.0 \times 10^{-4}$ \\
\bottomrule
\end{tabular}
\end{table}

\paragraph{Dataset and Tokenization.}
Our models are pre-trained on the C4 (Colossal Clean Crawled Corpus) dataset \citep{raffel2020exploring}. For text processing, we employ a SentencePiece tokenizer configured with a vocabulary size of $V=32,100$. The input sequence length is fixed at $L=2048$ tokens. To assess model performance, we report both perplexity and loss metrics on the official C4 validation split.

\paragraph{Optimization and Training.}
Our training configuration aligns with \citet{zhao2024galore}. We utilize the AdamW optimizer with hyperparameters set to $\beta_1=0.9$, $\beta_2=0.95$, and $\epsilon=10^{-8}$. Gradient clipping is applied with a threshold of $1.0$.

Regarding regularization, we employ two distinct settings:
\begin{itemize}[leftmargin=*, topsep=2pt, itemsep=1pt]
    \item \textbf{Standard Setting:} For the LLaMA-2B and the primary LLaMA-0.5B baseline, we apply a standard weight decay of $\lambda=0.1$.
    \item \textbf{No-WD Variant:} We train an additional LLaMA-0.5B variant with weight decay disabled ($\lambda=0$) while keeping all other hyperparameters identical, serving as a non-regularized baseline for comparison.
\end{itemize}

The learning rate is managed via a cosine decay schedule, which consists of:
\begin{enumerate}[leftmargin=*, topsep=2pt, itemsep=1pt]
    \item A linear warmup phase for the first 1,000 steps;
    \item A peak learning rate ($LR_{\max}$) as specified in Table~\ref{tab:llama_arch};
    \item A cosine decay phase reducing the rate to a minimum of $lr_{\min} = 0.1 \times LR_{\max}$.
\end{enumerate}
Training is conducted with a global batch size of 512 sequences for a total duration of 100,000 steps, processing approximately 105 billion tokens.

\paragraph{System Implementation.}
We leverage PyTorch Fully Sharded Data Parallel (FSDP) for distributed training. Specifically, we adopt the \texttt{HYBRID\_SHARD} strategy combined with \texttt{bfloat16} mixed precision. In this setup, model parameters and buffers are stored in FP32, while gradients and communication operations are performed in \texttt{bfloat16}. CPU offloading is disabled to maximize training efficiency.



\end{document}